\begin{document}
\renewcommand{\baselinestretch}{1.1}
\newcommand{\rb}{\rangle}
\newcommand{\lb}{\langle}
\newcommand{\om}{\omega}
\newcommand{\Real} {{\rm Real}}
\newcommand{\R} {{\mathbb R}}
\newcommand{\V} {{\cal V}}
\newcommand{\N} {{\mathbb N}}
\newcommand{\HH} {{\cal H}}
\newcommand{\E} {{\mathbb E}}
\newcommand{\Ld} {{\bf L^2}}
\newcommand{\sign}{{\rm sign}}
\newcommand{\C}{{\mathbb C}}
\newcommand{\W}{{\bf W}}
\newtheorem{theorem}{Theorem}[section]
\newtheorem{corollary}{Corollary}[section]
\newtheorem{proposition}{Proposition}[section]
\newtheorem{lemma}{Lemma}[section]


\newcommand\blfootnote[1]{
  \begingroup
  \renewcommand\thefootnote{}\footnote{#1}
  \addtocounter{footnote}{-1}
  \endgroup
}

\nipsfinalcopy

\author[1,2]{Xu Chen}
\author[2]{Xiuyuan Cheng}
\author[2]{St\'ephane Mallat}
\affil[1]{Department of Electrical Engineering,
Princeton University, NJ, USA}
\affil[2]{D\'epartement  d'Informatique, \'Ecole Normale Sup\'erieure, Paris, France
}

\title{Unsupervised Deep Haar Scattering on Graphs}

\maketitle

\begin{abstract}
The classification of high-dimensional data defined on graphs 
is particularly difficult when the graph geometry is unknown. 
We introduce a Haar scattering transform on graphs, which computes 
invariant signal descriptors. It is implemented 
with a deep cascade of additions, subtractions and absolute values,
which iteratively compute orthogonal Haar wavelet transforms.
Multiscale neighborhoods of unknown graphs are
estimated by minimizing an average total variation, 
with a pair matching algorithm of polynomial
complexity. Supervised classification with dimension reduction
is tested on data bases of scrambled images, and for signals 
sampled on unknown irregular grids on a sphere. 
\blfootnote{This work was supported by the ERC grant InvariantClass 320959.}
\vspace{0.5cm}
\end{abstract}

\section{Introduction}

The geometric structure of a data domain can be described with a 
graph \cite{graphs}, where neighbor data points are represented by vertices related by an edge. For sensor networks, this connectivity  depends upon the sensor physical locations, but in social networks it may correspond to strong interactions or similarities between two nodes. In many applications, the connectivity graph is unknown and  must therefore be estimated from data. We introduce an unsupervised learning algorithm to classify signals defined on an unknown graph. 

An important source of variability on graphs results from displacement of signal values. It may be due to movements of physical sources in a sensor network, or to propagation phenomena in social networks.  Classification problems are often
invariant to such displacements. Image pattern recognition or characterization of communities in social networks are examples of invariant problems. They 
require to compute locally or globally invariant descriptors, which are sufficiently rich to discriminate complex signal classes. 

Section \ref{Orhanssec} introduces a Haar scattering transform which builds an invariant representation of graph data, by cascading 
additions, subtractions and absolute values in a deep network.  
It can be factorized as a product of Haar wavelet transforms on the graph.
Haar wavelet transforms are flexible representations which 
characterize multiscale signal patterns on graphs \cite{gavish10,Guibas,graphs}. 
Haar scattering transforms are extensions on graphs of
wavelet scattering transforms, previously introduced for uniformly
sampled signals \cite{mallat}.

For unstructured signals defined on an unknown graph, 
recovering the full graph geometry
is an NP complete problem. We avoid this complexity by only learning 
connected multiresolution graph approximations. This is sufficient to compute  Haar
scattering representations. Multiscale neighborhoods are calculated
by minimizing an average total signal variation over training examples. It involves 
a pair matching algorithm of polynomial complexity. We show that
this unsupervised learning algorithms computes
sparse scattering representations.

For classification, the dimension of unsupervised
Haar scattering representations are reduced with supervised partial least
square regressions \cite{PLS-review}. It amounts
to computing a last layer of reduced dimensionality,
before applying a Gaussian kernel SVM classifier. 
The performance of a Haar scattering classification is tested on scrambled images, whose graph geometry is unknown. Results are provided for MNIST and CIFAR-10 image data bases.  Classification experiments are also performed  on scrambled signals whose samples are on an irregular grid of a sphere.
All computations can be reproduced with a software available at {\it www.di.ens.fr/data/scattering/haar}.

\section{Orthogonal Haar Scattering on a Graph}
\label{Orhanssec}

\begin{figure}
\begin{center}
\includegraphics[width=0.75\linewidth]{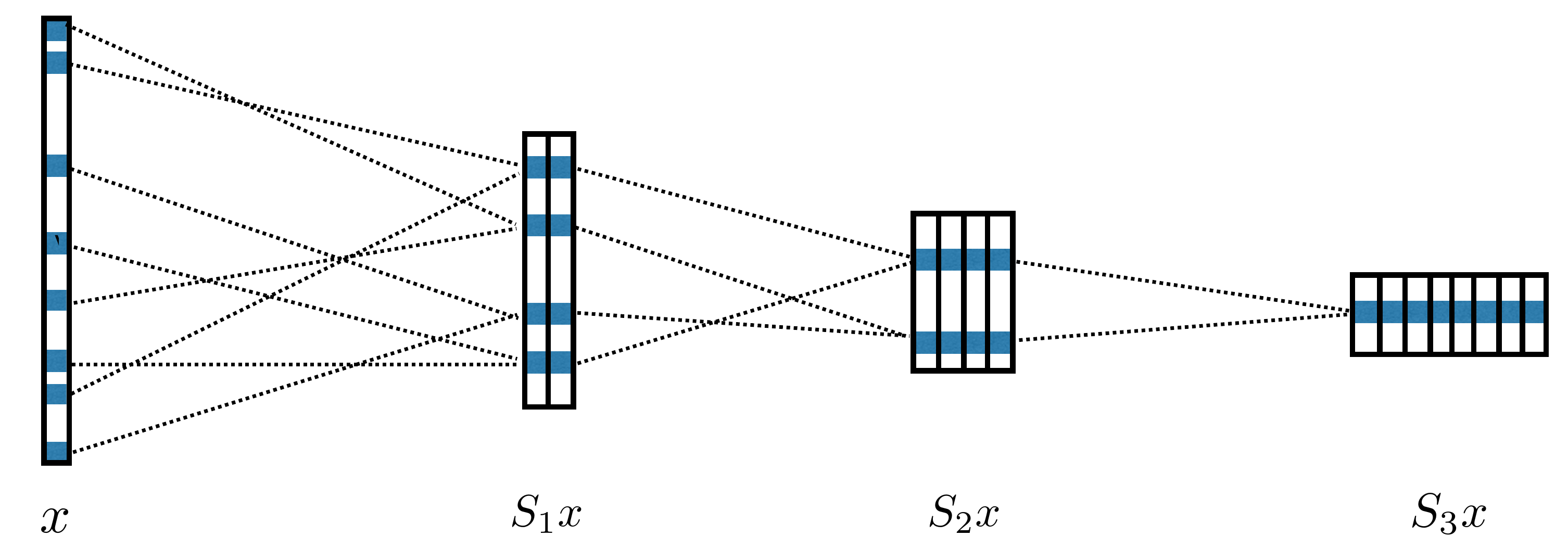}
\caption{\label{fig:1}
\small A Haar scattering network computes each coefficient
of a layer $S_{j+1}x$ by adding or subtracting a pair of coefficients
in the previous layer $S_j x$.
}
\end{center}
\end{figure}

\subsection{Deep Networks of Permutation Invariant Operators}

We consider signals  $x$ defined on an unweighted graph $G=(V,E)$, with
$V=\{1, ..., d\}$. Edges relate neighbor vertices.
We suppose that $d$ is a power of $2$ to simplify explanations. 
A Haar scattering is calculated by iteratively applying the following
permutation invariant operator
\begin{equation}
\label{permasn}
(\alpha,\beta) ~\longrightarrow
(\alpha+\beta,|\alpha-\beta|)~. 
\end{equation}
Its values are not modified by  a permutation of $\alpha$ and $\beta$, and both values
are recovered by 
\begin{equation}\label{eq:recover}
\max(\alpha,\beta) = \frac 1 2 \big(\alpha+\beta+|\alpha-\beta| \big)~~\mbox{and}~~
\min(\alpha,\beta) = \frac 1 2 \big(\alpha+\beta-|\alpha-\beta| \big)~.
\end{equation}

An orthogonal Haar scattering transform computes progressively more invariant
signal descriptors by applying this invariant operator
at multiple scales. This is implemented along a deep network
illustrated in Figure \ref{fig:1}. 
The network layer $j$ is a two-dimensional array 
$S_j x (n,q)$ of $d = 2^{-j} d \times 2^j$ coefficients, 
where $n$ is a node index and $q$  is a feature type.

The input network layer is $S_0 x (n,0) = x(n)$. We compute $S_{j+1} x$ by regrouping the $2^{-j} d$ nodes of $S_{j} x$  in $2^{-j-1} d$ pairs $(a_{n},b_{n})$, and applying the permutation invariant
operator (\ref{permasn}) to each pair $(S_j x(a_n,q), S_j x(b_n,q))$:
\begin{equation}
\label{eqn1}
S_{j+1} x (n,2q) = S_j x(a_n,q) + S_j x(b_n,q)
\end{equation}
and
\begin{equation}
\label{eqn2}
S_{j+1} x(n,2q+1) = |S_j x(a_n,q) - S_j x(b_n,q)|~.
\end{equation}
This transform is iterated up to a maximum depth $J \leq \log_2(d)$. It
computes $S_J x$ with $J d/2$ additions, subtractions and 
absolute values. Since $S_j x \geq 0$ for $j > 0$,  one can put an absolute value on the sum in (\ref{eqn1}) without
changing $S_{j+1} x$. It results that $S_{j+1} x$ is calculated from the previous
layer $S_j x$ by applying a linear operator followed by a non-linearity 
as in most deep neural network architectures. In our case this non-linearity is
an absolute value as opposed to rectifiers used in most deep networks 
\cite{Bengio}. 

For each $n$, the $2^j$ scattering coefficients $\{S_j x(n,q)\}_{0 \leq q < 2^j}$ 
are calculated
from the values of $x$ in a vertex set $V_{j,n}$ of size $2^j$.
One can verify by induction on (\ref{eqn1}) and (\ref{eqn2}) that 
$V_{0,n} = \{n\}$ for $0 \leq n < d$, and for any $j \geq 0$
\begin{equation}
\label{regroup}
V_{j+1,n} = V_{j,a_n} \cup V_{j,b_n}~.
\end{equation}

The embedded subsets $\{ V_{j,n} \}_{j,n}$  form a multiresolution approximation  of the vertex set $V$. At each scale $2^j$,  different pairings $(a_n,b_n)$ define different multiresolution approximations. A small graph displacement propagates signal values from a node to its neighbors. To build nearly invariant representations over such displacements,  a Haar scattering transform must regroup connected vertices.  It is thus computed over multiresolution vertex sets $V_{j,n}$ which are connected in the graph $G$.  It results from (\ref{regroup}) that   a necessary and sufficient condition is that each pair $(a_n,b_n)$ regroups two connected sets $V_{j,a_n}$ and $V_{j,b_n}$. 

Figure \ref{fig:2} shows two examples of connected multiresolution approximations. Figure \ref{fig:2}(a) illustrates the graph of an image grid, where pixels are connected to $8$ neighbors. In this example, each $V_{j+1,n}$ regroups two subsets $V_{j,a_n}$ and $V_{j,b_n}$ which are connected horizontally if $j$ is even and connected vertically if $j$ is odd. Figure \ref{fig:2}(b) illustrates a second example of connected multiresolution approximation on an irregular graph. There are many different connected multiresolution approximations resulting from different pairings at each scale $2^j$. Different multiresolution approximations correspond to different Haar scattering transforms. In the following, we compute several Haar scattering transforms of a signal $x$, by defining different multiresolution approximations.

\begin{figure*}[t!]
    \centering
    \begin{subfigure}[b]{0.4\textwidth}
        \centering
        \includegraphics[width=0.8\linewidth]{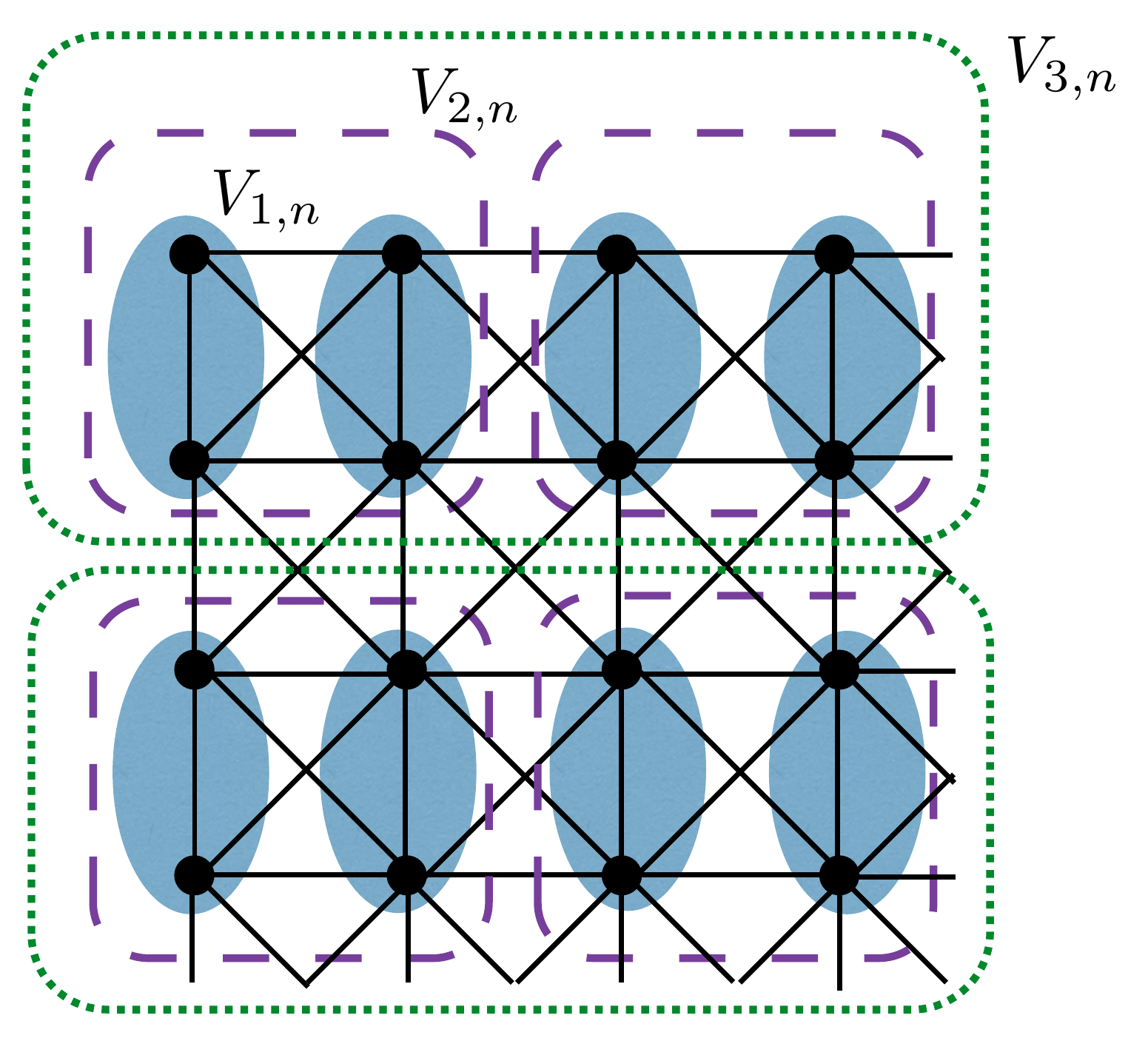} 
         \caption{ 
	\small
	}
    \end{subfigure}%
    ~~ 
    \begin{subfigure}[b]{0.4\textwidth}
        \centering
         \includegraphics[width=0.8\linewidth]{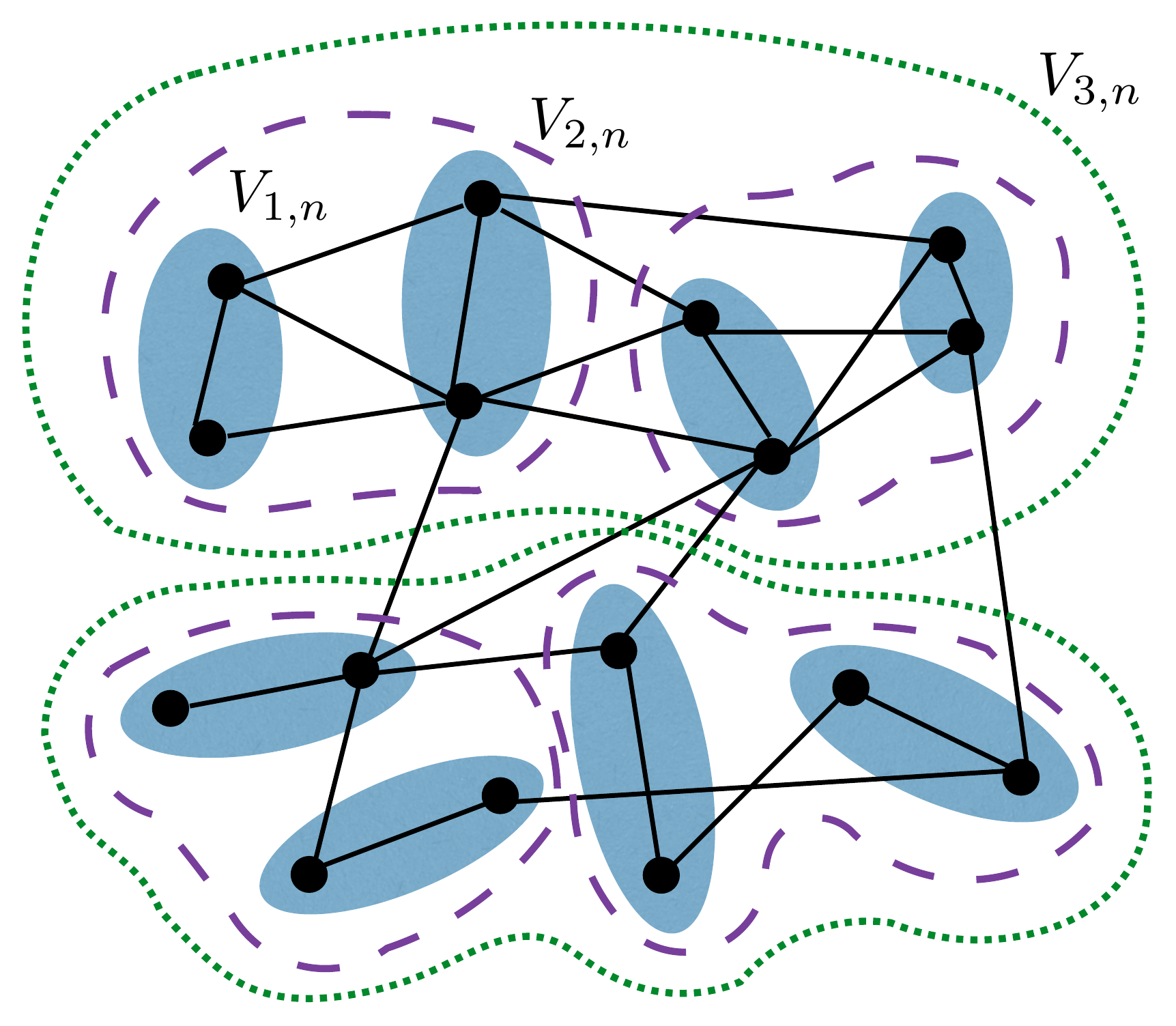} 
         \caption{ 
	\small
	}
    \end{subfigure}
 \caption{\label{fig:2}
\small A connected multiresolution is  a partition of vertices with embedded connected sets $V_{j,n}$ of size $2^j$. 
(a): Example of partition for the graph of a square image grid, for $1 \leq j \leq 3$. 
(b): Example on an irregular graph. 
 }
\end{figure*}

The following theorem proves that a Haar scattering preserves the norm and  that it is contractive up to a normalization factor $2^{j/2}$.  The contraction is due to the absolute value which suppresses the sign and hence reduces the amplitude of differences. The proof is in Appendix A.

\begin{theorem}\label{prop:norm-preserve}
For any $j \geq 0$, and any $x,x'$ defined on $V$
\[
\|S_j x - S_j x' \| \leq 2^{j/2} \|x - x' \|~,
\]
and 
\[
\|S_j x\| = 2^{j/2} \|x \|~.
\]
\end{theorem}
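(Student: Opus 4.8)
The plan is to prove both statements simultaneously by induction on $j$, reducing each to a single estimate for one network layer. The base case $j=0$ is immediate because $S_0 x = x$, so both the norm identity and the Lipschitz bound hold with factor $2^0 = 1$. The inductive engine is the pair of layer-wise relations
\[
\|S_{j+1} x\|^2 = 2\,\|S_j x\|^2 \qquad\text{and}\qquad \|S_{j+1} x - S_{j+1} x'\|^2 \le 2\,\|S_j x - S_j x'\|^2,
\]
after which iterating the factor $2$ across the $j$ layers yields $\|S_j x\| = 2^{j/2}\|x\|$ and $\|S_j x - S_j x'\| \le 2^{j/2}\|x-x'\|$.

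First I would analyze the effect of the permutation invariant operator on a single pair. Squaring and adding the two output coefficients gives $(\alpha+\beta)^2 + |\alpha-\beta|^2 = (\alpha+\beta)^2 + (\alpha-\beta)^2 = 2(\alpha^2+\beta^2)$, so the operator multiplies the squared norm contributed by each pair by exactly $2$. The second ingredient is the partition property: the pairs $(a_n,b_n)$ regroup the $2^{-j}d$ nodes of layer $j$ into $2^{-j-1}d$ disjoint pairs, so as $n$ ranges over the output nodes each node of $S_j x$ occurs in precisely one pair. Summing the single-pair identity over all output nodes $n$ and all feature types $q$ therefore reassembles the full squared norm $\|S_j x\|^2$ on the right, giving $\|S_{j+1} x\|^2 = 2\,\|S_j x\|^2$ and hence the norm equality.

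For the contractive bound I would run the same partition argument while comparing the coefficients produced by $x$ and $x'$. Writing $u = S_j x(a_n,q) - S_j x'(a_n,q)$ and $v = S_j x(b_n,q) - S_j x'(b_n,q)$, the difference of the two summed output coefficients is exactly $u+v$, while the difference of the two absolute-value output coefficients is controlled by the reverse triangle inequality, $\bigl||\alpha-\beta| - |\alpha'-\beta'|\bigr| \le |(\alpha-\beta)-(\alpha'-\beta')| = |u-v|$. Adding the two contributions and applying the parallelogram identity $(u+v)^2 + (u-v)^2 = 2(u^2+v^2)$ bounds the per-pair output difference by $2(u^2+v^2)$, i.e.\ twice the per-pair input difference. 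Summing over the partition produces $\|S_{j+1} x - S_{j+1} x'\|^2 \le 2\,\|S_j x - S_j x'\|^2$, completing the induction.

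The computations themselves are elementary, so the only real care is bookkeeping: I must verify that the pairs $(a_n,b_n)$ genuinely partition the $2^{-j}d$ nodes at each scale, so that the double sum over output nodes $n$ and doubled feature indices exactly recovers $\|S_j x\|^2$ with no coefficient omitted or double-counted. The one structural asymmetry between the two statements lies in the absolute value: the norm identity needs only the exact equality $|\alpha-\beta|^2 = (\alpha-\beta)^2$, whereas the distance bound relies on the reverse triangle inequality, which is strict precisely when $\alpha-\beta$ and $\alpha'-\beta'$ differ in sign. This is the source of the contraction, reflecting the stated intuition that the absolute value suppresses the sign and thereby reduces the amplitude of differences.
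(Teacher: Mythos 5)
Your proof is correct and follows essentially the same route as the paper: a per-pair analysis of the operator $(\alpha,\beta)\mapsto(\alpha+\beta,|\alpha-\beta|)$ giving the exact factor-$2$ norm identity and the factor-$2$ contraction, summed over the disjoint pairs and iterated across layers. The only difference is that you spell out the justification of the per-pair contraction (reverse triangle inequality plus the parallelogram identity), which the paper states without detail, so your write-up is if anything slightly more complete.
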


\subsection{Iterated Haar Wavelet Transforms}

We show that a Haar scattering transform can be written as a cascade of orthogonal Haar wavelet transforms and absolute value non-linearities. It is a particular example of scattering transforms introduced in \cite{mallat}. It computes coefficients measuring signal variations at multiple scales and multiple orders. We prove that the signal can be recovered from Haar scattering coefficients  computed over enough multiresolution approximations.

A scattering operator is contractive because of the absolute value.  When coefficients have an arbitrary sign, suppressing the sign reduces by a factor $2$ the volume of the signal space. We say that $S_J x(n,q)$ is a coefficient of order $m$ if its computation includes $m$ absolute values of differences. The amplitude of scattering coefficients typically decreases exponentially when the scattering order $m$ increases, because of the contraction produced by the absolute value.  We verify from (\ref{eqn1}) and (\ref{eqn2}) that $S_J x(n,q)$ is a  coefficient of order $m=0$ if $q=0$ and of order $m > 0$ if
\[
q = \sum_{k=1}^m 2^{J-j_k}~~\mbox{for}~~0 \leq j_k < j_{k+1} \leq J~.
\]
It results that there are ${J \choose m} 2^{-J} d $ coefficients $S_J x(n,q)$ of order $m$.

We now show that Haar scattering coefficients of order $m$  are obtained by cascading $m$ orthogonal Haar wavelet tranforms defined on the graph $G$. A Haar wavelet at a scale $2^J$ is defined over each $V_{j,n} = V_{j-1,a_n} \cup V_{j-1,b_n}$ by
\[
\psi_{j,n} = 1_{V_{j-1,a_n}} - 1_{V_{j-1,b_n}}~. 
\]
For any $J \geq 0$, one can verify \cite{Guibas,gavish10} that
\[
\{1_{V_{J,n}} \}_{0 \leq n < 2^{-J } d} \cup \{\psi_{j,n}\}_{0 \leq n < 2^{-j} d , 0 \leq j < J}
\]
is a non-normalized orthogonal Haar basis of the space of signals defined on $V$. Let us denote $\lb x , x' \rb = \sum_{v \in V} x(v)\, x'(v)$.  
Order $m=0$ scattering coefficients sum the values of $x$ in each $V_{J,n}$
\[
S_J x(n,0) = \lb  x\,,\,1_{V_{J,n}} \rb ~.
\]
Order $m=1$ scattering coefficients are sums of absolute values of
orthogonal Haar wavelet
coefficients. They measure the variation amplitude $x$
at each scale $2^{j}$, in each $V_{J,n}$:
\[
S_J x (n,2^{J-j_1}) = \sum_{p \atop{V_{j_1,p}\subset V_{J,n}}}
| \lb  x\,,\, \psi_{j_1, p} \rb |.
\]
Appendix B proves that second order scattering coefficients $S_J x(n,2^{J-j_1}+2^{J-j_2})$ are computed by applying a second orthogonal Haar wavelet transform to
first order scattering coefficients. 
A coefficient $S_J x(n,2^{J-j_1}+2^{J-j_2})$ 
is an averaged second order increment over $V_{J,n}$, calculated 
from the variations at the scale $2^{j_2}$,
of the increments of $x$ at the scale $2^{j_1}$. 
More generally, Appendix B also proves
that order $m$ coefficients 
measure multiscale variations of $x$ at the order $m$, and are obtained
by applying a Haar wavelet transform on scattering
coefficients of order $m-1$.

A single Haar scattering transform loses information since it applies a cascade of permutation invariant operators. However, the following theorem proves that $x$ can be recovered from scattering transforms
computed over $2^J$ different multiresolution approximations.

\begin{theorem}
\label{thm:multi-layer-recovery2}
There exist $2^J$ multiresolution approximations such that almost all $x \in {\mathbb R^d}$ can be reconstructed from
their scattering coefficients on these multiresolution approximations.
\end{theorem}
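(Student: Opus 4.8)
The plan is to argue by induction on the depth $J$, exploiting the recursive structure of the network to show that doubling the number of multiresolution approximations buys exactly one extra layer; this is where the factor $2^J = 2\cdot 2^{J-1}$ originates. The base case $J=0$ is immediate, since $S_0 x = x$ and a single (trivial) approximation recovers $x$.

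For the inductive step I would fix a first-layer pairing $P$ splitting $V$ into $d/2$ pairs, and write $u(n)=S_1x(n,0)=x(a_n)+x(b_n)$ and $w(n)=S_1x(n,1)=|x(a_n)-x(b_n)|$. Because the operator (\ref{eqn1})--(\ref{eqn2}) never mixes the feature index $q$, one checks directly that completing $P$ with any multiresolution approximation $\mathcal B$ of the $d/2$ ``super-nodes'' yields a depth-$J$ scattering of $x$ whose coefficients are, after a reindexing of $q$, exactly the concatenation of the depth-$(J-1)$ scatterings of $u$ and of $w$ taken over $\mathcal B$. Applying the induction hypothesis on the super-node set of size $d/2$, there are $2^{J-1}$ approximations $\{\mathcal B_k\}$ recovering almost every depth-$(J-1)$ signal; as a finite intersection of full-measure sets is still full measure, the same family recovers both $u$ and $w$ for generic $x$. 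Hence the $2^{J-1}$ approximations $\{P\circ\mathcal B_k\}$ determine $u(n)$ and $w(n)$, and therefore the unordered pair $\{x(a_n),x(b_n)\}=\{(u+w)/2,(u-w)/2\}$ of (\ref{eq:recover}) for every $P$-pair.

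This leaves $x$ known only up to a swap inside each $P$-pair, which I resolve with a second first-layer pairing $P'$ chosen so that the graph on $V$ with edge set $P\cup P'$ is connected, for instance the even and odd edges of a Hamiltonian cycle through $V$ (possible since $d$ is a power of $2$). Repeating the construction with $P'$ costs another $2^{J-1}$ approximations, for $2^J$ in total, and returns the unordered value-pairs along every $P'$-edge. Since each vertex of the cycle is incident to exactly one $P$-edge and one $P'$-edge, its value is the unique common element of the two corresponding pairs, which for generic $x$ is a single real number; this reconstructs $x$ coordinate by coordinate.

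The genericity bookkeeping is the delicate part. Three families of measure-zero sets must be excluded: the exceptional sets of the induction hypothesis pulled back to $u$ and to $w$; the sets where some $|x(a_n)-x(b_n)|$ vanishes; and the sets where two incident value-pairs share more than one value. Each is cut out by finitely many piecewise-linear equations and so is Lebesgue-null, and the pullbacks under the map $x\mapsto(u,w)$ stay null because that map is linear and invertible on each sign cell, so the pushforward of Lebesgue measure onto the $(u,w)$ coordinates is absolutely continuous. The main obstacle is precisely to verify that these null sets propagate correctly through the recursion and that, off their finite union, the incident value-pairs along $P\cup P'$ always meet in a single point, so the reconstruction is unique.
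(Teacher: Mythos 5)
Your proof is correct, and it reaches the theorem by a genuinely different route than the paper, even though both end up building the same kind of family of $2^J$ approximations: two pairings per level whose union is connected (your Hamiltonian-cycle condition on $P\cup P'$ is exactly the paper's notion of \emph{interlaced} pairings, since a union of two perfect matchings is connected if and only if no proper nonempty subset is closed under both). The paper works top-down: it proves a one-layer inversion lemma (two interlaced pairings determine a scalar signal provided it takes more than two distinct values, via a propagation argument along the cycle) and applies it feature-by-feature to recover $S_j x$ from $S_{j+1}x$ for $j = J-1,\dots,0$; its exceptional set is the set of $x$ for which some intermediate scattering vector takes at most two values. You work bottom-up: you peel off the first layer using the factorization $S_J x = (S_{J-1}u, S_{J-1}w)$ (correct, because each layer appends its feature bit in the least significant position, so the first layer's bit is the most significant one of $q$), recover $u,w$ by induction, and resolve the within-pair swaps by intersecting the two unordered value-pairs incident to each vertex along the cycle; note that you also need the channels $u',w'$ associated with $P'$ to avoid the exceptional set, which your pullback argument covers verbatim but which you should state explicitly. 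What each approach buys: your disambiguation step is more elementary and airtight --- uniqueness of the common element fails only on the explicit hyperplanes $x(p(v)) = x(p'(v))$ --- whereas the paper's lemma assumes the weaker hypothesis ``more than two values'' but leaves its propagation argument informal; conversely, your route needs the extra measure-theoretic step of pulling null sets back through the piecewise-linear map $x \mapsto (u,w)$, which the paper avoids by phrasing its exceptional set directly in terms of intermediate coefficients. Finally, both your proof and the paper's implicitly require at least four nodes at every level where two distinct interlaced pairings are needed, i.e.\ $J \le \log_2 d - 1$; neither addresses the extreme case $J = \log_2 d$.
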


This theorem is proved in Appendix C. The key idea is that Haar
scattering transforms are computed with permutation invariants operators. Inverting these operators allows to recover values of signal pairs but not their locations. 
However, recombining these values on enough overlapping sets allows one to recover their locations and hence the original signal $x$. This is done with multiresolutions which are interlaced at each scale $2^j$, 
in the sense that if a multiresolution is pairing $(a_n,b_n)$ and $(a_n',b_n')$
then another multiresolution approximation is pairing $(a_n',b_n)$. 
Connectivity conditions are needed on the graph $G$ to guarantee the existence of ``interlaced'' multiresolution approximations which are all connected.

\section{Learning}

\subsection{Sparse Unsupervised Learning of Multiscale Connectivity}

Haar scattering transforms compute multiscale signal variations of multiple orders,
over non-overlapping sets of size $2^J$. To build signal descriptors which are
nearly invariant to signal displacements on the graph, we want to compute 
scattering transforms over connected sets in the graph, which a priori requires
to know the graph connectivity. However, in many applications,
the graph connectivity is unknown. For piecewise regular signals, 
the graph connectivity implies
some form of correlation between neighbor signal values, and may
thus be estimated from a training set of unlabeled examples $\{x_i \}_i$ \cite{roux2008learning}.

Instead of estimating the full graph geometry, which
is an NP complete problem, we estimate 
multiresolution approximations which are connected. 
This is a hierarchical clustering problem \cite{cluster}. 
A multiresolution approximation is connected
if at each scale $2^j$, each
pair $(a_n, b_n)$ regroups two vertex sets $(V_{j,a_n},V_{j,b_n})$ which are connected.
This connection is estimated by minimizing the total variation within 
each set $V_{j,n}$, which are clusters of size $2^j$ \cite{cluster}.
It is done with a fine to coarse aggregation strategy. 
Given $\{V_{j,n} \}_{0 \leq n < 2^{-j}d}$, we compute 
$V_{j+1,n}$ at the next scale,
by finding an optimal pairing $\{a_n,b_n\}_{n}$ which minimizes the 
total variation of scattering vectors, averaged over the
training set $\{x_i \}_i$:
\begin{equation}\label{eq:pairing-cost}
\sum_{n=0}^{2^{-j-1}d}~ \sum_{q=0}^{2^{j}-1}~ \sum_i 
|S_j x_i (a_n,q)-S_j x_i (b_n,q)|~.
\end{equation}
This is a weighted matching problem which can be solved by the Blossom Algorithm of Edmonds \cite{edmonds} with $O(d^3)$
operations. We use the implementation in \cite{rothberg}. 
Iterating on this algorithm for $0 \leq j < J$ thus computes a
multiresolution approximation at the scale $2^J$, with a 
hierarchical aggregation of graph vertices. 

Observe that 
\[
\|S_{j+1} x \|_1 = \|S_j x \|_1 + \sum_{q} \sum_{n}
|S_j x(a_n,q)-S_j x(b_n,q)|~.
\]
Given $S_j x$, it results that the minimization of (\ref{eq:pairing-cost}) is equivalent
to the minimization of $\sum_i \|S_{j+1} x_i \|_1$. This can be interpreted as
finding a multiresolution approximation which yields an optimally sparse scattering transform. It operates with a greedy layerwise strategy across
the network layers, similarly to sparse autoencoders 
for unsupervised deep learning \cite{Bengio}.

As explained in the previous section, several Haar scattering transforms are needed to obtain a complete signal representation. The unsupervised learning computes $N$ multiresolution approximations by dividing the training set $\{x_i \}_i$ in $N$ non-overlapping subsets, and  learning a  different multiresolution approximation from each training subset. 

\subsection{Supervised Feature Selection and Classification}

The unsupervised learning computes a vector of scattering coefficients which is typically much larger than the dimension $d$ of $x$. However, only a subset of these invariants are 
needed for any particular classification task. The classification is improved by a supervised dimension reduction which selects a subset of scattering coefficients.  
In this paper, the feature selection is implemented with a partial least square regression \cite{PLS-review, Zhang2014, SchwartzICCV2009}. The final supervised classifier is a Gaussian kernel SVM.

Let us denote by $\Phi x = \{ \phi_p x \}_{p}$  the set of all scattering coefficients at a scale $2^J$,
computed from $N$ multiresolution approximations. 
We perform a feature selection adapted to each class $c$, with a
partial least square regression of the one-versus-all indicator function
\[
f_c(x) = 
\left\{
\begin{array}{ll}
1 & \mbox{if $x$ belongs to class $c$}\\
0 & \mbox{otherwise}
\end{array}
\right.~.
\]
A partial least square greedily selects and orthogonalizes 
each feature, one at a time. At the $k^{th}$ iteration, it selects 
a $\phi_{p_k} x$, and a Gram-Schmidt orthogonalization
yields a normalized $\tilde \phi_{p_k} x$, which is uncorrelated relatively to all previously
selected features:
\[
\forall r < k~~,~~\sum_i \tilde \phi_{p_k} (x_i)\, \tilde \phi_{p_r} (x_i) = 0
~~\mbox{and}~~\sum_i |\tilde \phi_{p_k} (x_i)|^2 = 1\, .
\]
The $k^{th}$ feature $\phi_{p_k} x$ is selected so that the linear regression of 
$f_c (x)$ on $\{ \tilde \phi_{p_r} x \}_{1 \leq r \leq  k}$ has a minimum mean-square error,
computed on the training set. This is equivalent to finding $\phi_{p_k}$ so that
$\sum_i f_c (x_i) \, \tilde \phi_{p_k} (x_i)$ is maximum.

The partial least square regression thus selects and computes
$K$ decorrelated scattering  features $\{ \tilde \phi_{p_k} x \}_{k < K}$ for each class $c$. 
For a total of $C$ classes, the union of all these feature sets defines 
a dictionary of size $M = K\, C$. They are linear combinations of the original
Haar scattering coefficients $\{ \phi_p x\}_p$. This dimension reduction 
can thus be interpreted as a last fully connected network layer, which outputs
a vector of size $M$. 
The parameter $M$ allows one to optimize the 
bias versus variance trade-off. It can be adjusted from the decay of
the regression error of each $f_c$ \cite{PLS-review}.
In our numerical experiments, it is set to a fixed size for all data bases.

\section{Numerical Experiments}\label{sec:exp}

Unsupervised Haar scattering representations
are tested on classification problems,
over scrambled images and scrambled data on a sphere, for which the geometry
is therefore unknown. Classification results are compared with a Haar scattering
algorithm computed over the known signal geometry, and with state of the art
algorithms. 

A Haar scattering representation involves few parameters which are reviewed.
The scattering scale $2^J \leq d$ is the invariance scale. Scattering coefficients are computed up to the a maximum order 
$m$, which is set to $4$ in all experiments. Indeed, higher order scattering coefficient have a negligible relative energy, which is below $1\%$.
The unsupervised learning algorithm computes $N$ multiresolution approximations,
corresponding to $N$ different scattering transforms. Increasing $N$ decreases
the classification error but it increases computations.
The error decay becomes negligible for $N \geq 40$. 
The supervised dimension reduction selects a final 
set of $M$ orthogonalized scattering coefficients. We set  $M = 1000$ in all numerical experiments. 

For signals defined on an unknown graph, 
the unsupervised learning computes an estimation of connected 
multiresolution sets by minimizing an average total variation. 
For each data basis of scrambled signals,
the precision of this estimation is evaluated by computing the percentage of
multiscale sets which are indeed connected in the original topology (an image grid or a grid on the sphere).

\subsection{MNIST Digit Recognition}
\label{original}

\begin{figure}[t]
\begin{center}
\begin{subfigure}[b]{0.95\linewidth}
\includegraphics[width=0.09\linewidth]
{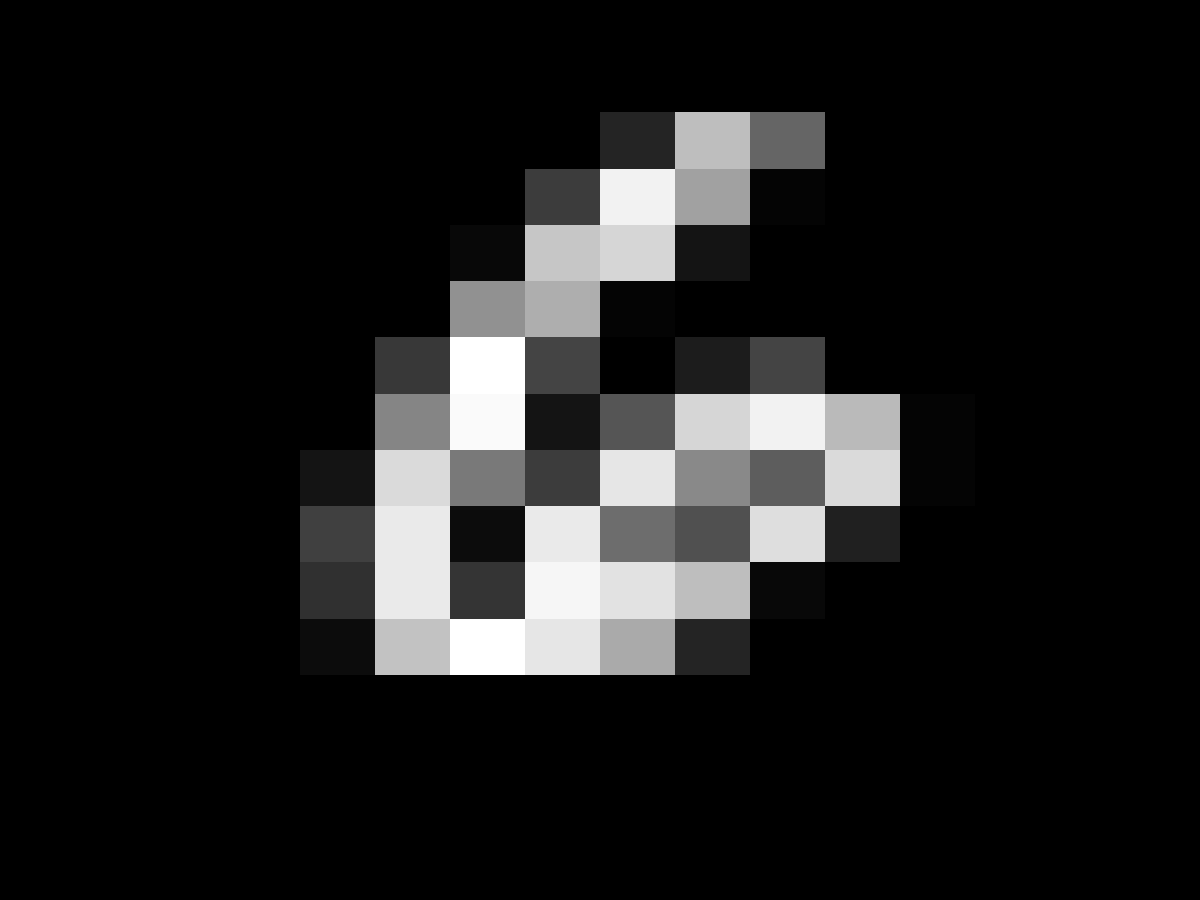}
\includegraphics[width=0.09\linewidth]
{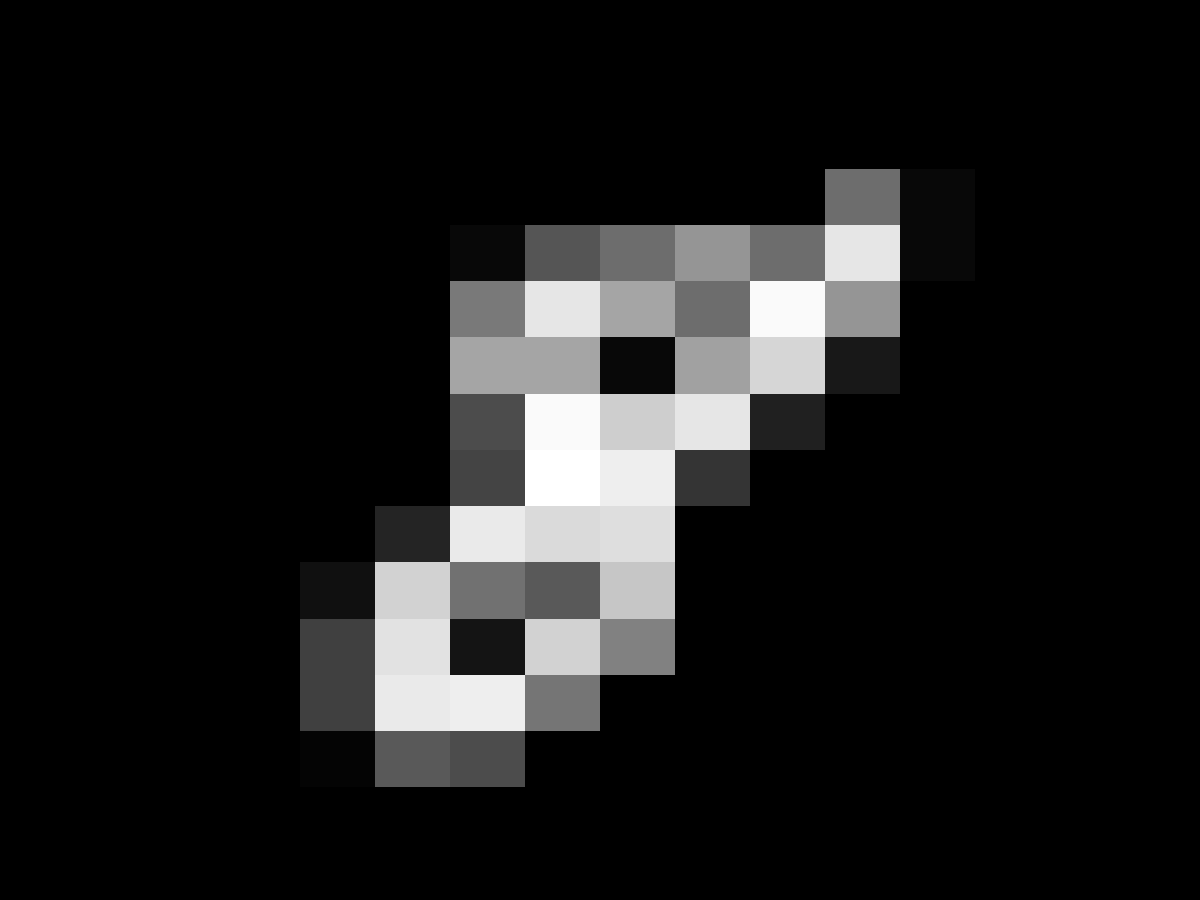}
\includegraphics[width=0.09\linewidth]
{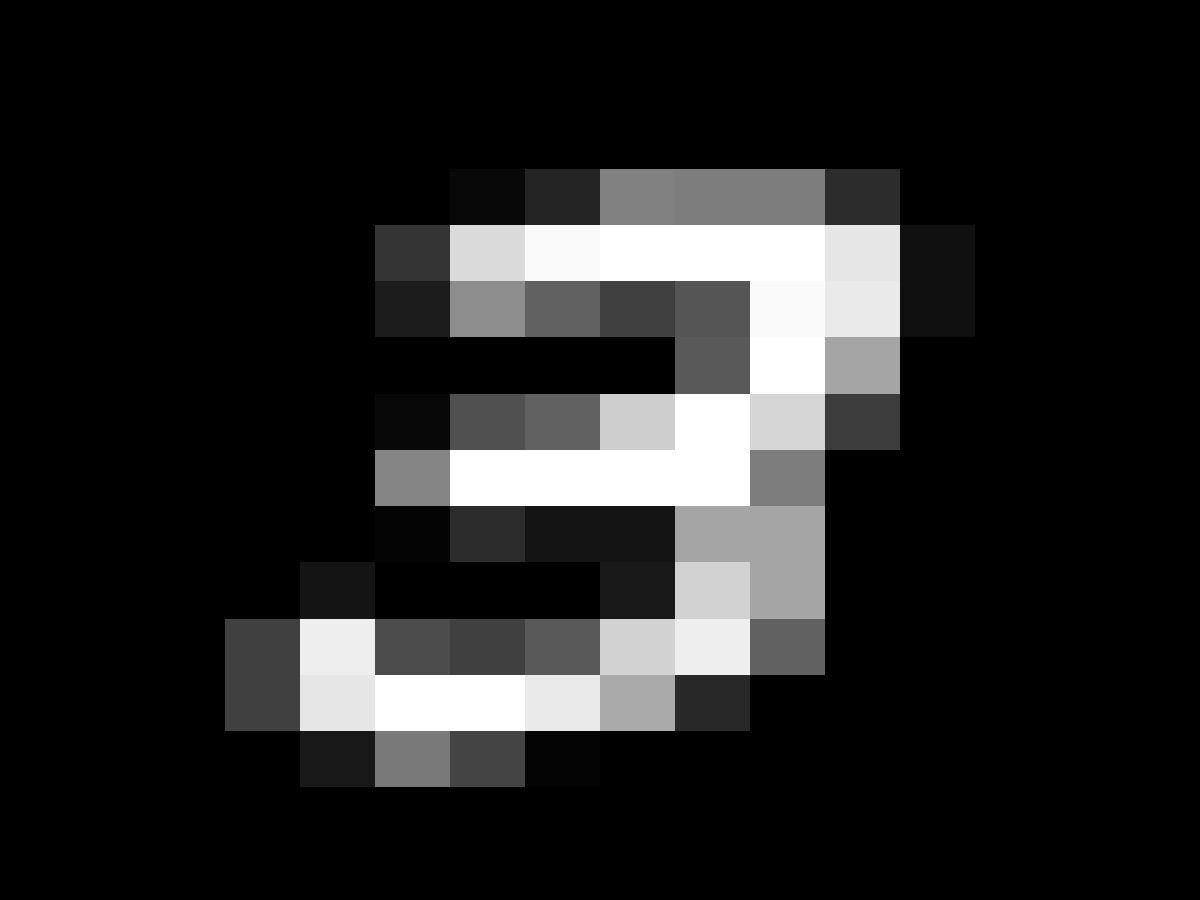}
\includegraphics[width=0.09\linewidth]
{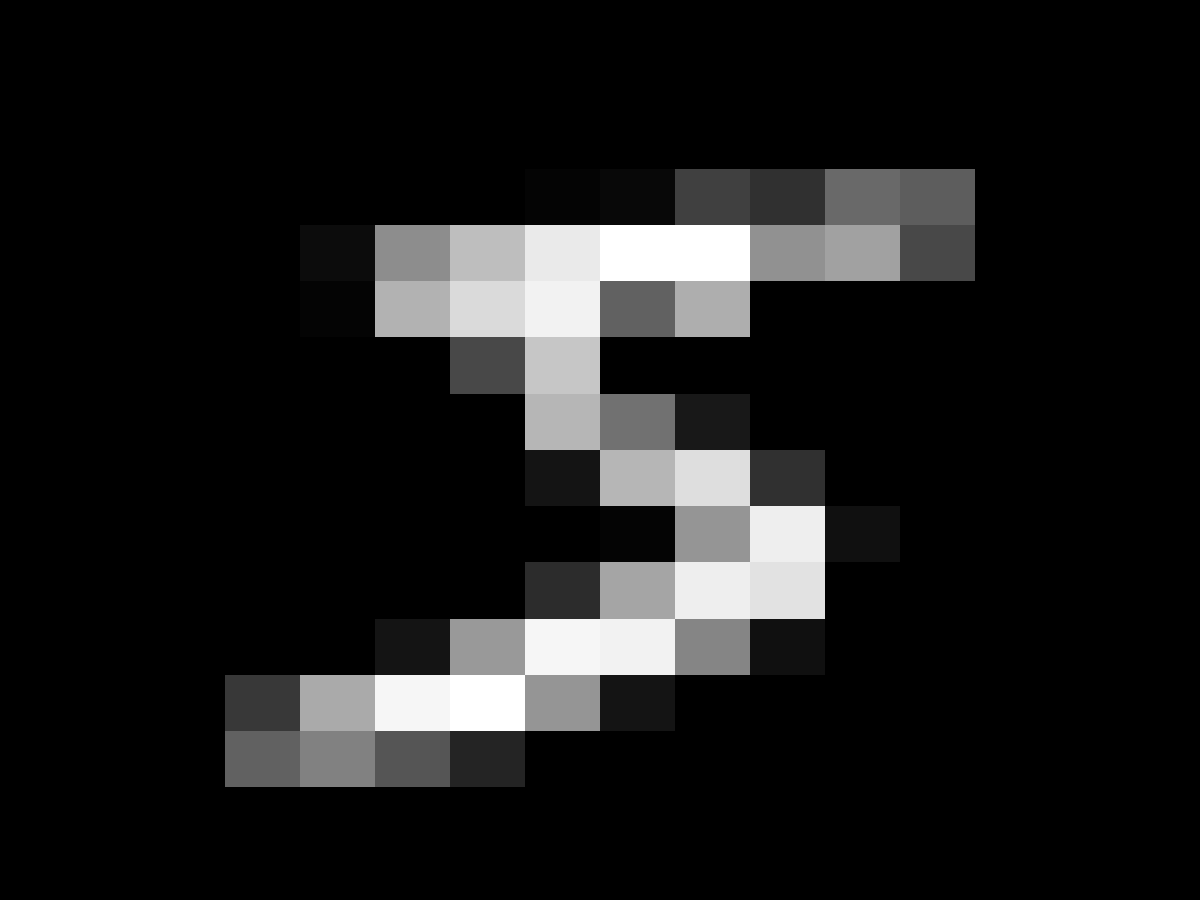}
\includegraphics[width=0.09\linewidth]
{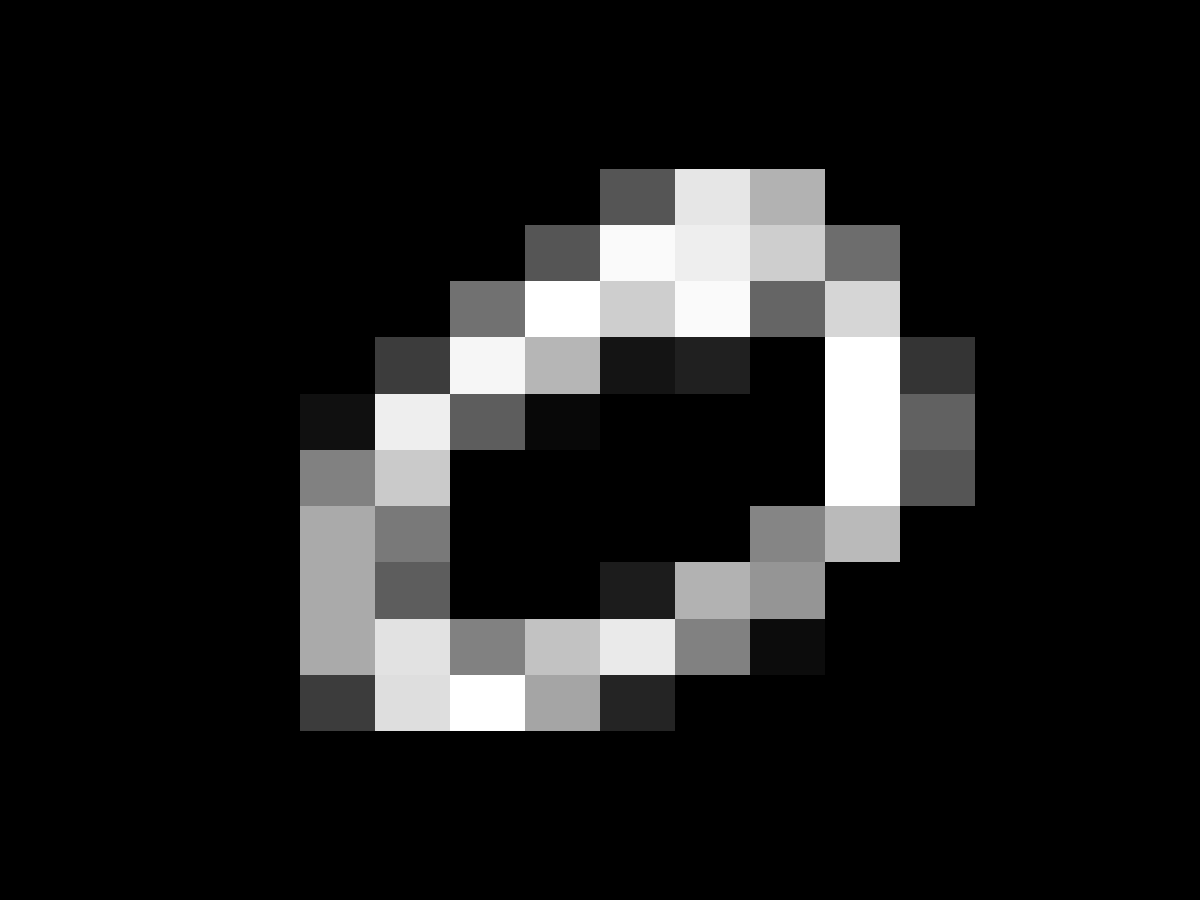}
\hspace{0.6cm}
\includegraphics[width=0.09\linewidth]
{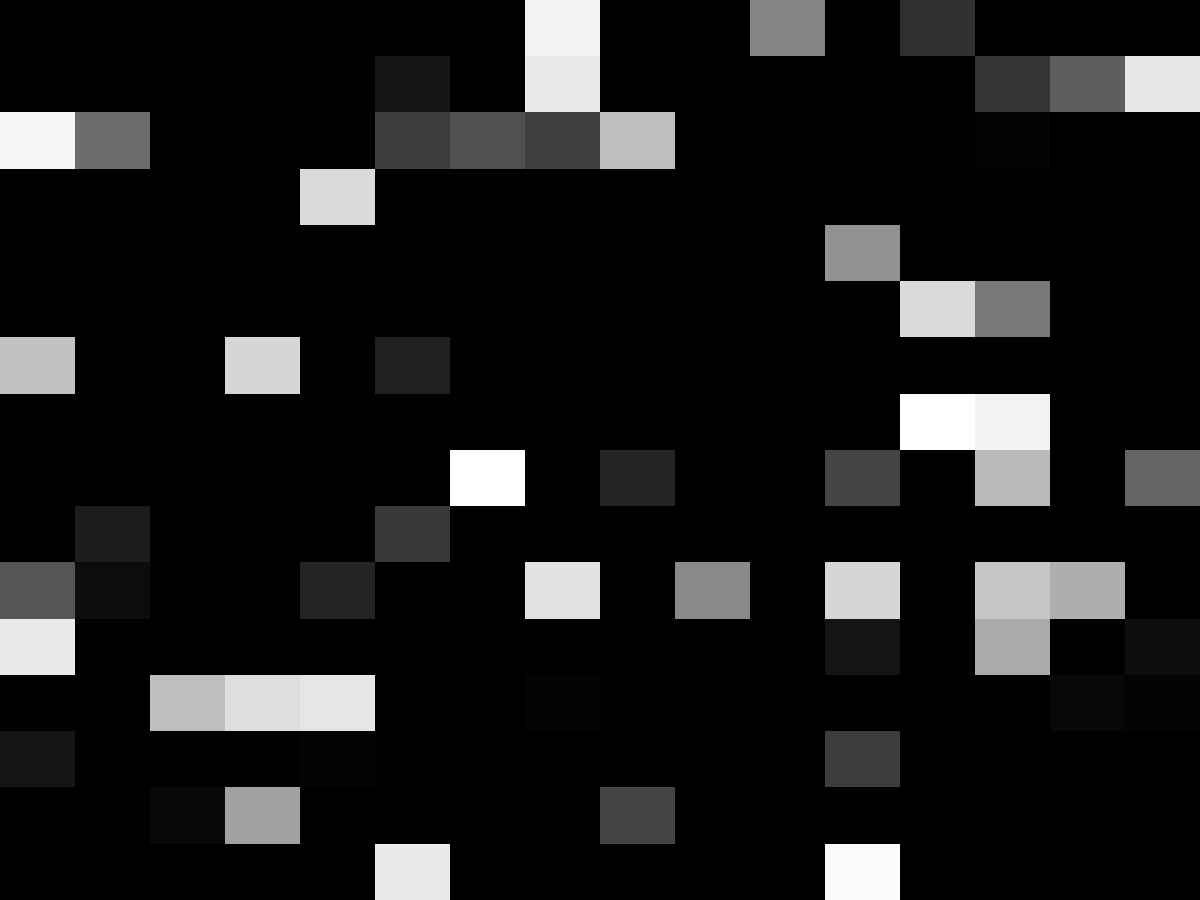}
\includegraphics[width=0.09\linewidth]
{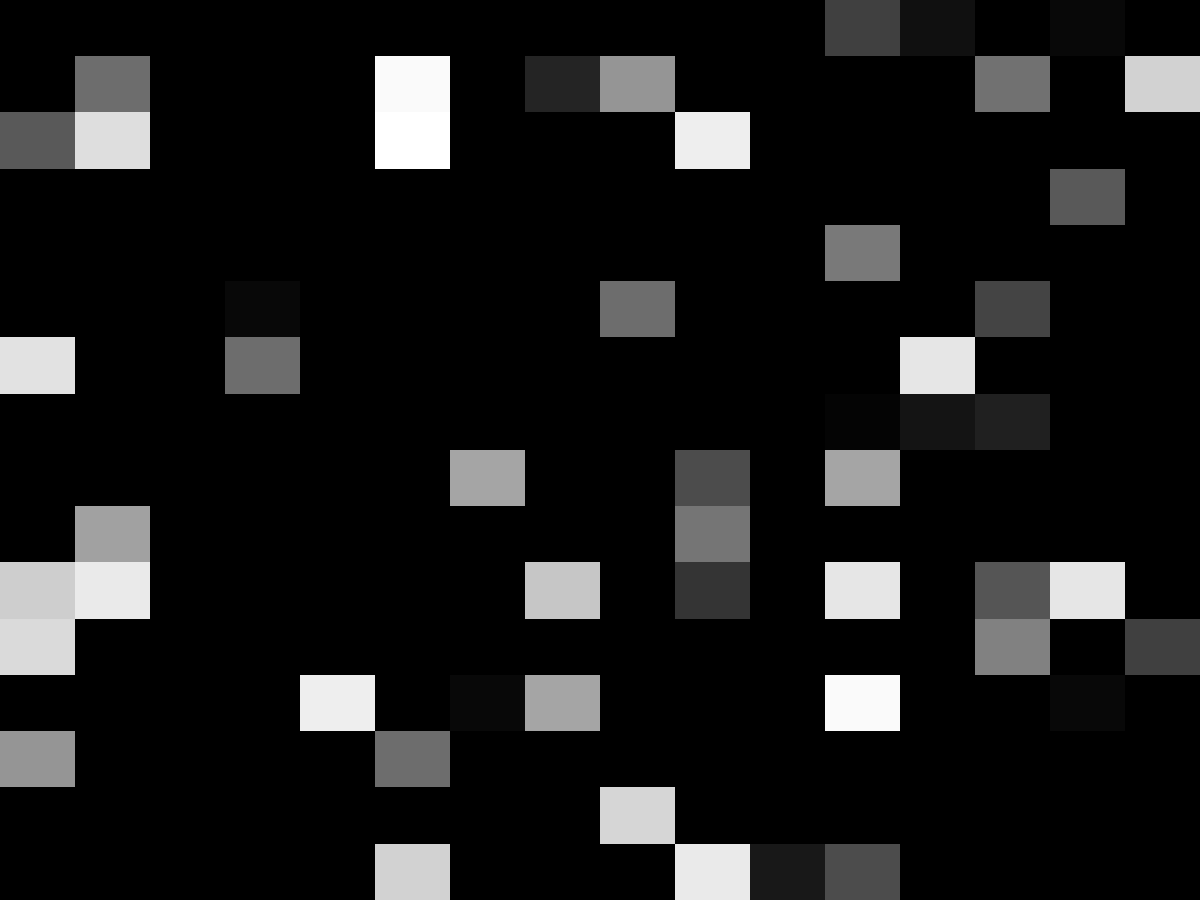}
\includegraphics[width=0.09\linewidth]
{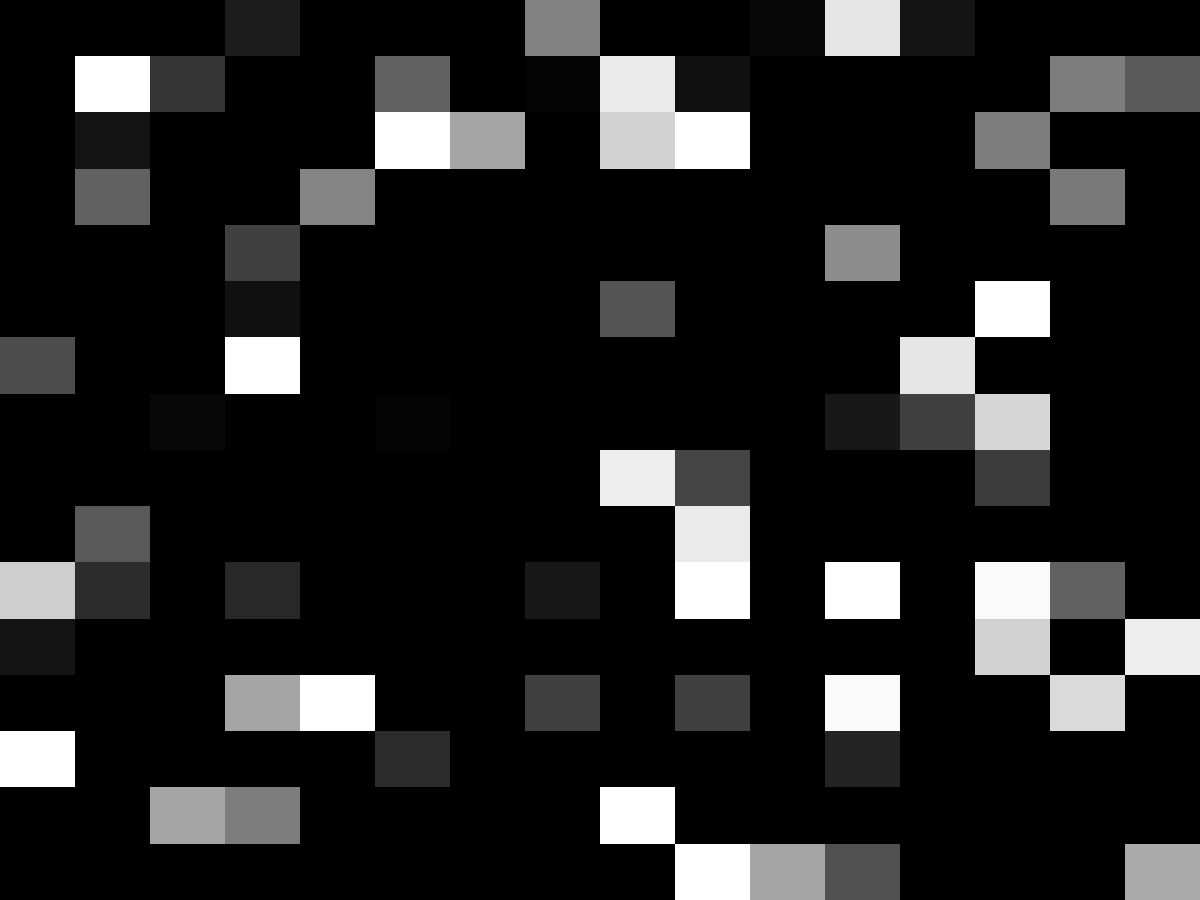}
\includegraphics[width=0.09\linewidth]
{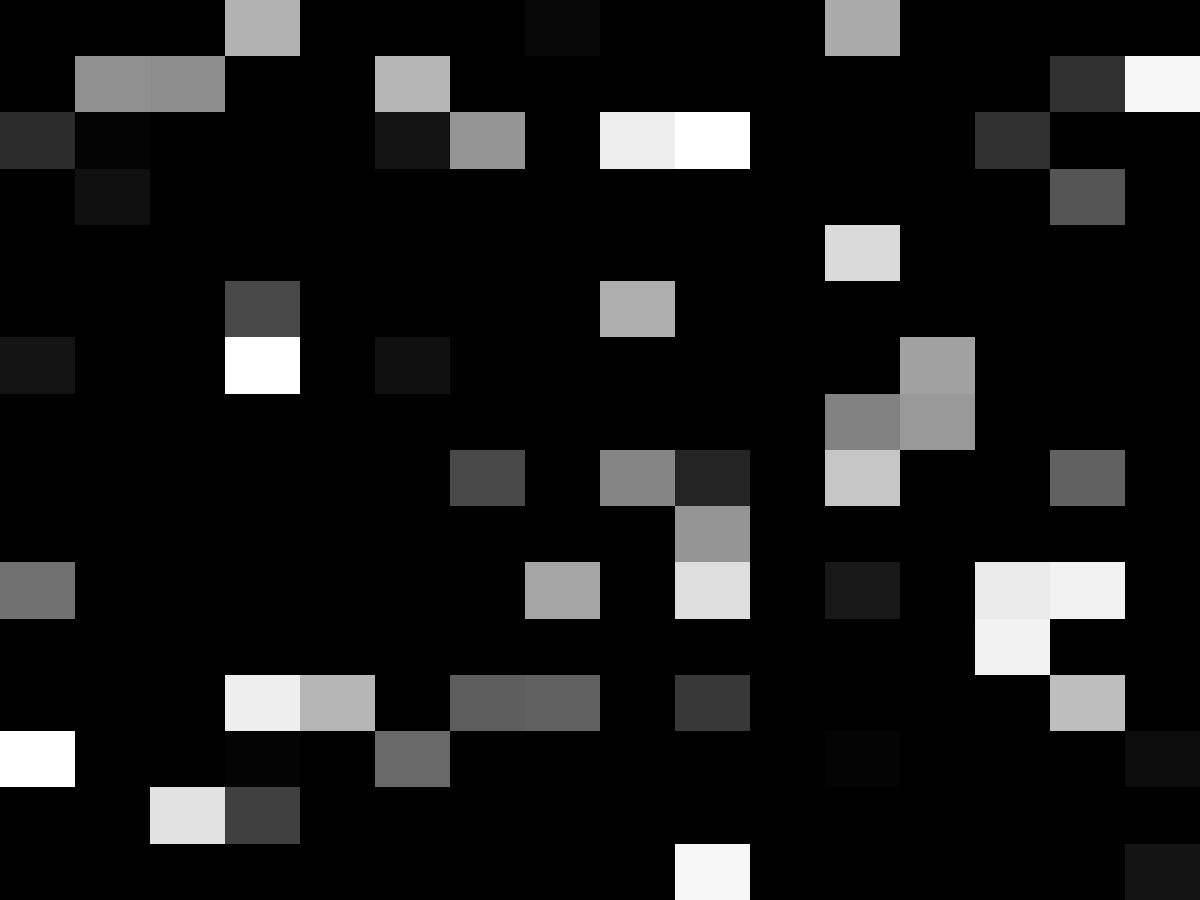}
\includegraphics[width=0.09\linewidth]
{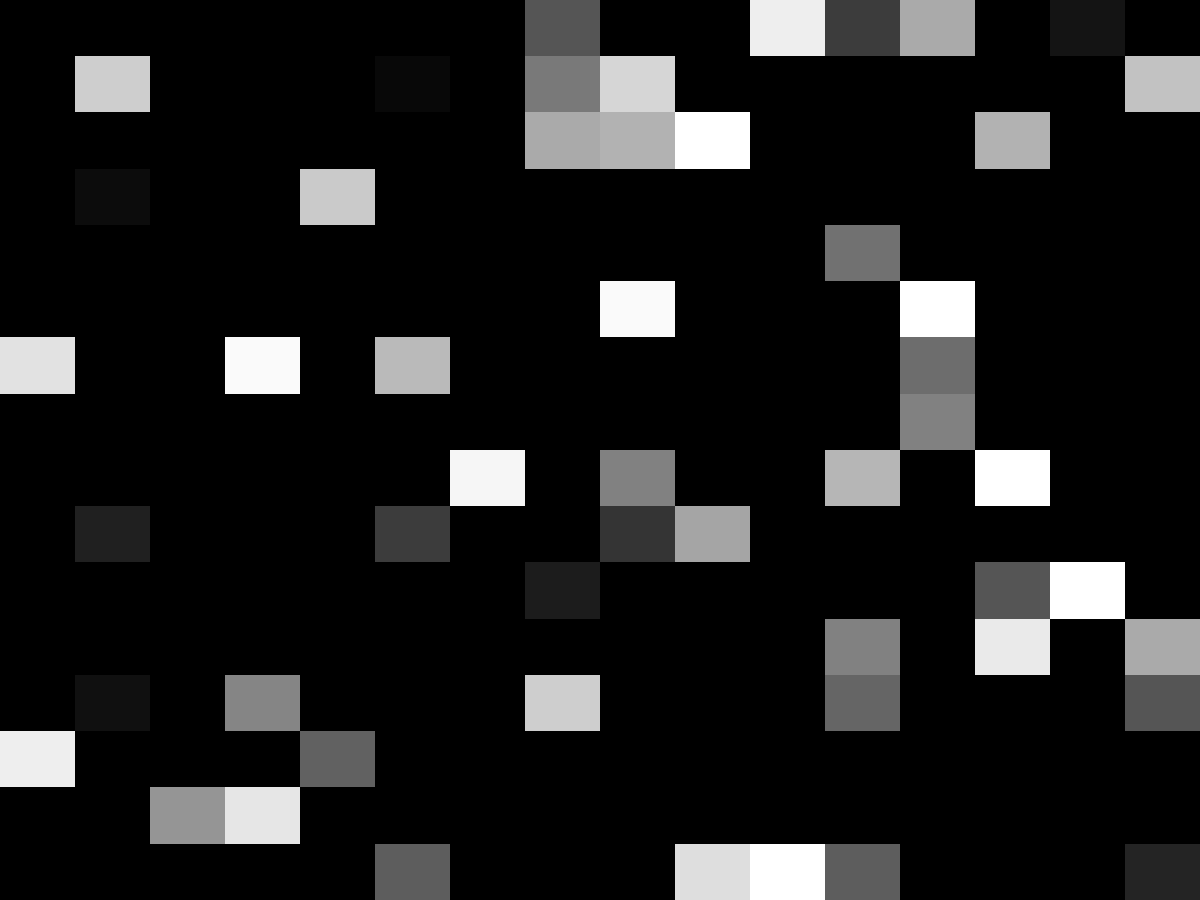}\vspace{-0.25cm}
\vspace{0.15cm}
\end{subfigure}
\caption{ 
\small MNIST images (left) and images after random pixel 
permutations (right).}
\label{fig:samplemnist}
\end{center}
\end{figure}

MNIST is a data basis with $6\times 10^4$ hand-written digit images of
size $d \leq 2^{10}$, with $5\times 10^4$ images for
training and $10^4$ for testing. Examples of MNIST images before and after
pixel scrambling are shown in Figure \ref{fig:samplemnist}. 
The best classification results are obtained with a maximum invariance scale $2^J = 2^{10}$. 
The classification error is $0.9\%$, with an unsupervised learning of $N=40$ multiresolution approximations.
Table \ref{table:mnist} shows that it is below but close to state of the art results obtained with fully supervised deep convolution, which are optimized with supervised backpropagation algorithms.

The unsupervised learning  computes multiresolution sets $V_{j,n}$ from scrambled images.
At scales $1 \leq 2^j \leq 2^3$, 
$100\%$ of these multiresolution sets 
are connected in the original image grid,
which proves that the geometry is well estimated at these scales. 
This is only evaluated on meaningful pixels which do not remain zero on all training images. 
For $j = 4$ and $j=5$ the percentages of connected sets 
are respectively $85\%$ and $67\%$. The percentage of connected sets 
decreases because long range correlations are weaker. 

One can reduce the Haar scattering classification error from $0.9\%$ to $0.59\%$ with a known image geometry. The Haar scattering transform is then
computed over multiresolution approximations which are directly constructed from the image grid as in Figure \ref{fig:2}(a).
Rotations and translations define $N = 64$ different connected multiresolution approximations, which yield a reduced error of $0.59\%$. State of the art classification errors on MNIST,
for non-augmented data basis (without elastic deformations), are respectively
$0.46\%$ with a Gabor scattering \cite{Joan} and $0.53\%$ with a supervised training of deep convolution networks \cite{LeCun}. This shows that without any learning,
a Haar scattering using geometry is close to the state of the art.
 
\begin{table}[h]
\begin{center}
\begin{tabular}{c|c|c|c}
\hline
Maxout MLP + dropout \cite{Goodfellow}
& Deep convex net. \cite{Yu} 
& DBM + dropout \cite{Hinton} 
& Haar Scattering\\
\hline
0.94 & 0.83 & \textbf{0.79} & 0.90 \\
\hline
\end{tabular}
\end{center}\vspace{-0.25cm}
\caption{\small 
Percentage of errors for the classification of scrambled MNIST images, obtained by different algorithms.}
\label{table:mnist}
\end{table}

\subsection{CIFAR-10 Images}\label{cifar}

CIFAR-10 images are color images of $32\times32$ pixels, 
which are much more complex than MNIST digit images.
It includes $10$ classes, such as ``dogs'', ``cars'', ``ships''
with a total of $5 \times 10^4$ training examples and $10^4$ testing examples. 
The $3$ color bands are represented with $Y,U,V$ channels and scattering 
coefficients are computed independently in each channel. 

The Haar scattering is first applied to scrambled CIFAR images whose geometry is unknown. The minimum classification error is obtained at
the scale $2^J = 2^7$ which is below the maximum scale $d = 2^{10}$. 
It maintains some localization information on the image features. 
With $N = 10$ multiresolution approximations, a Haar scattering transform
has an error of $27.3\%$. It is $10\%$ below previous results obtained on this
data basis, given in Table \ref{table:cifar}.

Nearly $100\%$ of the multiresolution sets $V_{j,n}$ computed from scrambled images are connected in the original image grid, for $1 \leq j \leq 4$, which
shows that the multiscale geometry is well estimated at these fine scales.  
For $j=5, 6$ and $7$, the proportions of connected sets are
$98\%$, $93\%$ and $83\%$ respectively. As for MNIST images, the connectivity
is not as precisely estimated at large scales. 

\begin{table}[h]
\begin{center}
\begin{tabular}{c|c|c}
\hline
 Fastfood \cite{Le} &Random Kitchen Sinks \cite{Le}& Haar Scattering\\
\hline
36.9  & 37.6 & \textbf{27.3} \\
\hline
\end{tabular}
\end{center}\vspace{-0.25cm}
\caption{\small Percentage of errors for the classification of scrambled 
CIFAR-10 images, with different algorithms.}
\label{table:cifar}
\end{table}

The Haar scattering classification error is reduced from $27.7\%$ to $21.3\%$ if
the image geometry is known. Same as for MNIST, we compute $N = 64$  multiresolution approximations obtained by translating and rotating. 
After dimension reduction, the classification error is $21.3\%$. This error is above the state
of the art obtained by a supervised convolutional network  \cite{Goodfellow} ($11.68\%$), but the Haar scattering representation involves no learning.

\subsection{Signals on a Sphere}
\label{3D}

A data basis of irregularly sampled signals on a sphere is constructed in \cite{Joan2}, by projecting the MNIST image digits on 
$d=4096$ points randomly sampled on the 3D sphere, and by randomly rotating these
images on the sphere. The random rotation is either uniformly distributed on the sphere 
or restricted with a smaller variance (small rotations) \cite{Joan2}.
The digit `9' is removed from the data set because it can not be
distinguished from a `6' after rotation. Examples of the dataset are shown in Figure \ref{fig:spheremnist}.

\begin{figure}[t]
\begin{center}
\includegraphics[width=0.24\linewidth]
{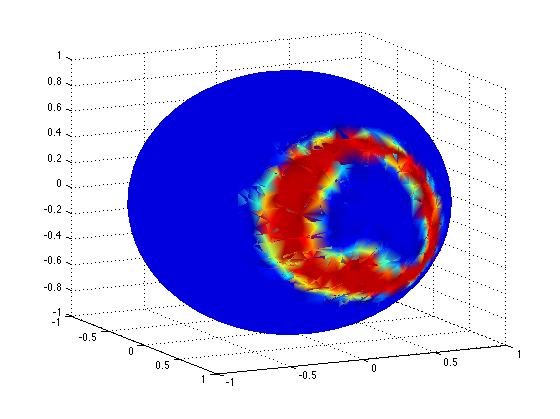}
\includegraphics[width=0.24\linewidth]
{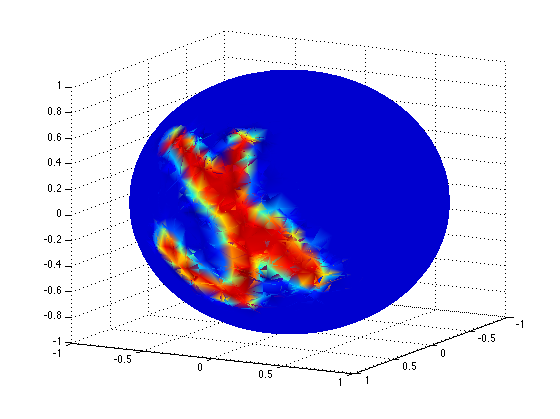}
\includegraphics[width=0.24\linewidth]
{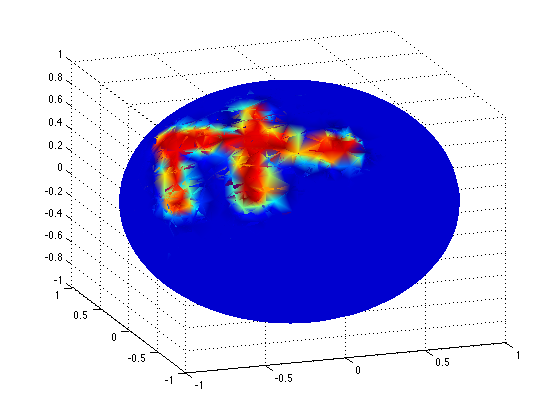}
\includegraphics[width=0.24\linewidth]
{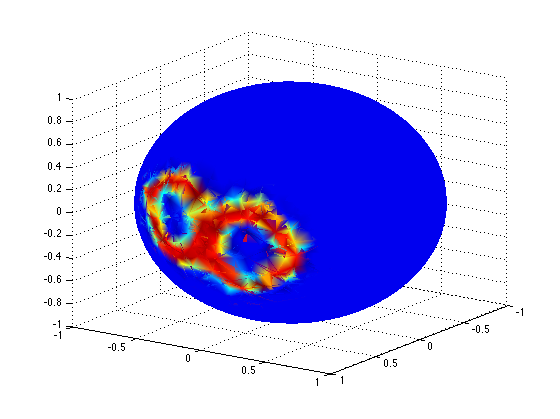}
\caption{\small Images of digits mapped on a sphere.}
\label{fig:spheremnist}
\end{center}
\end{figure}

The classification algorithms introduced in \cite{Joan2} use the known distribution of points on the sphere,
by computing a representation based on the graph Laplacian. 
Table \ref{table:smnist} gives the results reported in \cite{Joan2},
with a fully connected neural network, and a spectral graph Laplacian network.

As opposed to these algorithms, the Haar scattering algorithm uses no information on the positions of points on the sphere. Computations are
performed from a scrambled set of signal values, without any geometric information. Scattering transforms are calculated up to the maximum scale $2^J = d = 2^{12}$.  
A total of $N = 10$  multiresolution approximations are estimated by unsupervised learning, and the classification is performed from $M = 10^3$ selected
coefficients. Despite the fact that the geometry is unknown, the
Haar scattering reduces the error rate both for small and large 
3D random rotations. 

In order to evaluate the precision of our geometry estimation, we use the neighborhood information based on the 3D coordinates of the 4096 points on the sphere of radius 1. We say that two points are connected if their geodesic distance is smaller than 0.1. Each point on the sphere has on average $8$ connected points. For small rotations, the percentage of learned multiresolution sets which are connected is 92\%, 92\%, 88\% and 83\% for $j$ going from $1$ to $4$. It is computed on meaningful points with nonneglegible energy. For large rotations, it is 97\%, 96\%, 95\% and 95\%. This shows that the multiscale geometry on the sphere is well estimated.

\begin{table}[h]
\begin{center}
\begin{tabular}{c|c|c|c|c}
\hline 
&Nearest& Fully  & Spectral  & Haar \\
&Neighbors & Connect.  & Net.\cite{Joan2} & Scattering \\
\hline
Small rotations& 19 & 5.6 & 6 & \textbf{2.2}  \\
\hline
Large rotations& 80 & 52 & 50 & \textbf{47.7}  \\
\hline
\end{tabular}
\end{center}\vspace{-0.25cm}
\caption{\small Percentage of errors for the classification of MNIST images
rotated and sampled on a sphere
\cite{Joan2}, with a nearest neighbor classifier, a
fully connected two layer neural network, 
a spectral network \cite{Joan2}, and 
a Haar scattering.}
\label{table:smnist}
\end{table}

\section{Conclusion}

A Haar scattering transform computes invariant data representations by iterating
over a hierarchy of permutation invariant operators, calculated 
with additions, subtractions and absolute values. The geometry of unstructured
signals is estimated with an unsupervised learning algorithm, which minimizes
the average total signal variation over multiscale neighborhoods. 
This shows that unsupervised deep learning can be implemented with a polynomial complexity algorithm.
The supervised classification includes a feature selection implemented with a partial least square regression. State of the art results have been shown on
scrambled images as well as random signals sampled on a sphere.
The two important parameters of this architecture are the network depth, which 
corresponds to the invariance scale, and the dimension reduction of the final layer, set to $10^3$ in all experiments. It can thus easily be applied to any data set. 

This paper concentrates on scattering transforms of real valued signals. 
For a boolean vector $x$, a boolean scattering transform is
computed by replacing the operator (\ref{permasn}) by 
a boolean permutation invariant
operator which transforms $(\alpha,\beta)$ into
$(\alpha \, {\text{or}}\, \beta\, , \,
\alpha \, {\text{xor}}\, \beta)$. Iteratively applying this operator
defines a boolean scattering transform $S_j x$ having
similar properties. 
\newpage

\bibliographystyle{plain}

\newpage
\section*{Appendix}

\appendix

\numberwithin{equation}{section}
\numberwithin{theorem}{section}

\section{Proof of Theorem 2.1}
\begin{proof} 
Observe that the permutation invariant operator which associates to $(\alpha_0,\beta_0)$ the values
\[
(\alpha_1,\beta_1) = (\alpha_0+\beta_0,|\alpha_0-\beta_0|)
\]
satisfies
\[
\alpha_1^2 + \beta_1^2 = 2 ( \alpha_0^2 + \beta_0^2).
\]
Moreover, if $(\alpha'_1,\beta'_1) = (\alpha'_0+\beta'_0,|\alpha'_0-\beta'_0|)$ then 
\[
(\alpha_1-\alpha_1')^2 + (\beta_1-\beta_1')^2 \leq 2 \Big( (\alpha_0-\alpha'_0)^2 + 
(\beta_0 - \beta_0')^2 \Big).
\]
Since $S_{j+1}x$ is computed by applying this operator to pairs of values
of $S_j x$, we derive that 
\[
\|S_{j+1} x \|^2 = 2 \|S_{j+1} x \|^2~~\mbox{and}~
\|S_{j+1} x - S_{j+1} x' \|^2 \leq 2\, \|S_{j} x - S_{j} x' \|^2~.
\]
Since $S_0 x = x$ and $S_0 x' = x'$, iterating on these two equations
proves Theorem 2.1.

%
\end{proof}

\section{Haar Scattering from Haar Wavelets}
 
The following proposition proves that order $m+1$ scattering coefficients are computed by applying an orthogonal Haar wavelet transform to order $m$ scattering  coefficients. We also prove by induction on $m$ that
a scattering coefficient 
$S_j x(n,q)$ is of order $m$ if and only if $q = 2^j \kappa$ with 
\[
\kappa = \sum_{k=1}^{m} 2^{-j_k}
\]
for some $0 < j_1< ...<j_m  \le J$.
This property is valid for $m = 0$ and the following proposition shows
that if it is valid for $m$ then it is also valid for 
$m+1$ in the sense that an 
order $m+1$ coefficient is 
indexed by $\kappa + 2^{-j_{m+1}}$, and it is computed 
by applying an orthogonal Haar transform to order $m$ scattering coefficients
indexed by $\kappa$.

\begin{proposition}
For any $v \in V$ and $0 \le q < 2^j$ we write
\[
\overline{S}_j x ( v , q) = \sum_{n=0}^{2^{-j}d-1} S_j x(n, q)\, 1_{V_{j,n}}( v) . 
\]
For any $\kappa = \sum_{k=1}^{m} 2^{-j_k}$, any $j_{m+1} > j_m$
and $0 \le n < 2^{-j} d$, 
\begin{equation}
\label{propsdfnsd}
S_j x( n,  2^j( \kappa + 2^{-j_{m+1}})  )
 = \sum_{ p \atop{V_{j_{m+1}, p}\subset V_{j, n}} }
	| \lb \overline{S}_{j_{m}}  x (\cdot,  2^{j_m} \kappa ) , \psi_{j_{m + 1},  p}   \rb |.
\end{equation}
\end{proposition}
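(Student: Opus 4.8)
The plan is to prove the identity directly from the two scattering update rules (\ref{eqn1}) and (\ref{eqn2}), taking as given (the induction hypothesis of the surrounding text) that order-$m$ coefficients are exactly those indexed by $q = 2^j\kappa$ with $\kappa = \sum_{k=1}^m 2^{-j_k}$, and tracking how the normalized index $q/2^j = \kappa$ evolves across layers. The starting observation is a parity dichotomy. Writing the target index as $2^j(\kappa + 2^{-j_{m+1}}) = \sum_{k=1}^{m} 2^{j-j_k} + 2^{j - j_{m+1}}$, the smallest exponent is $j - j_{m+1}$, so the index is even precisely when $j > j_{m+1}$ and is odd (equal to an even number plus $1$) exactly at $j = j_{m+1}$. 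I would verify these parities first, together with the analogous fact that the order-$m$ index $2^{\ell}\kappa$ is even for every $\ell > j_m$, since these parities dictate that the additive rule (\ref{eqn1}) is active at every depth except the single depth $j_{m+1}$, where the absolute-difference rule (\ref{eqn2}) fires once.

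First I would handle the purely additive layers by telescoping. Since an even index $2^{\ell}\kappa'$ is produced from the layer-$(\ell-1)$ index $2^{\ell-1}\kappa'$ through (\ref{eqn1}), and since $V_{\ell,n} = V_{\ell-1,a_n}\cup V_{\ell-1,b_n}$ is a disjoint union, induction on the depth yields the aggregation rule
\[
S_j x(n, 2^j\kappa') = \sum_{p\,:\,V_{j',p}\subset V_{j,n}} S_{j'}x(p, 2^{j'}\kappa')
\]
whenever every intermediate index between levels $j'$ and $j$ stays even. Applying this twice — once to reduce the order-$(m+1)$ coefficient at level $j$ down to level $j_{m+1}$, and once to rewrite the order-$m$ coefficients at level $j_{m+1}-1$ as sums of the order-$m$ coefficients at level $j_m$ — collapses the whole statement onto the lone odd step at depth $j_{m+1}$.

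At that odd step, rule (\ref{eqn2}) gives $S_{j_{m+1}}x(p, 2^{j_{m+1}}\kappa + 1) = |S_{j_{m+1}-1}x(a_p, 2^{j_{m+1}-1}\kappa) - S_{j_{m+1}-1}x(b_p, 2^{j_{m+1}-1}\kappa)|$, and I would substitute the aggregation formula into each of the two terms. The result is an absolute difference of sums of $S_{j_m}x(\cdot, 2^{j_m}\kappa)$ taken over the two halves $V_{j_{m+1}-1,a_p}$ and $V_{j_{m+1}-1,b_p}$ of $V_{j_{m+1},p}$. Recognizing that $\psi_{j_{m+1},p} = 1_{V_{j_{m+1}-1,a_p}} - 1_{V_{j_{m+1}-1,b_p}}$ and that $\overline{S}_{j_m}x(\cdot, 2^{j_m}\kappa)$ is constant on each block $V_{j_m,n'}$, this difference is identified with the Haar wavelet coefficient $\langle \overline{S}_{j_m}x(\cdot, 2^{j_m}\kappa), \psi_{j_{m+1},p}\rangle$. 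Re-attaching the outer additive aggregation then produces the claimed sum over all $p$ with $V_{j_{m+1},p}\subset V_{j,n}$.

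The step I expect to be most delicate is this final identification of the local difference with the wavelet inner product. Because $\overline{S}_{j_m}x$ spreads each coefficient over a block of $|V_{j_m,n'}| = 2^{j_m}$ vertices, summing the piecewise-constant function against the unnormalized indicator blocks of $\psi_{j_{m+1},p}$ introduces normalization factors that must be carried consistently with the non-normalized Haar basis used throughout; getting these constants to land correctly is the real bookkeeping burden of the proof. The parity verification underlying the telescoping is routine but cannot be skipped, since it is precisely what guarantees that the additive branch is taken at every depth other than $j_{m+1}$ and thus what makes the two aggregation steps legitimate.
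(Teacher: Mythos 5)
Your proposal follows essentially the same route as the paper's own proof: both isolate the unique depth $j_{m+1}$ at which the index becomes odd so that the difference rule (\ref{eqn2}) fires exactly once, telescope the purely additive layers below and above that step, identify the single absolute difference with a Haar wavelet coefficient, and re-attach the outer aggregation over $\{p : V_{j_{m+1},p}\subset V_{j,n}\}$; the paper merely phrases your coefficient-level aggregation as inner products against the indicators $1_{V_{j,n}}$.

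One substantive remark: the normalization you flag as the ``real bookkeeping burden'' is not merely delicate --- it is a genuine inconsistency in the paper, and your flattened aggregation over coefficient indices (which is exact, since it never sums over vertices) is precisely what exposes it. With the stated conventions ($\overline{S}_{j}x(\cdot,q)$ equal to $S_jx(r,q)$ on every vertex of the block $V_{j,r}$, the unnormalized $\psi_{j,n}$, and $\langle f,g\rangle=\sum_{v\in V}f(v)g(v)$), each block $V_{j_m,r}$ contributes $2^{j_m}$ identical vertex values to the inner product, so the odd-step identification actually reads
\begin{equation*}
\bigl|\langle \overline{S}_{j_m}x(\cdot,2^{j_m}\kappa),\,\psi_{j_{m+1},p}\rangle\bigr|
\;=\;
2^{j_m}\,S_{j_{m+1}}x\bigl(p,\,2^{j_{m+1}}\kappa+1\bigr),
\end{equation*}
and consequently the proposition as printed holds only up to the factor $2^{j_m}$ (it is exact for $m=0$, which is why the first-order formula in the text checks out). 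The paper's proof hides this by asserting $S_{j+1}x(n,2q)=\langle\overline{S}_jx(\cdot,q),1_{V_{j+1,n}}\rangle$, which is itself off by $2^j$, and similar uncompensated factors appear in its two subsequent inner-product identities. Carrying your bookkeeping through honestly therefore proves the corrected identity rather than the literal one; the statement can be repaired either by inserting a factor $2^{-j}$ in the definition of $\overline{S}_jx$ or by multiplying the right-hand side of the displayed equation by $2^{-j_m}$.
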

\begin{proof}
We derive from
the definition of a scattering transform in equations (3,4) in the text that
\begin{equation*}
\begin{split}
S_{j+1} x(n, 2 q) 
  &= S_j x(a_n, q) + S_j x(b_n, q) 
   = \lb \overline{S}_j x(\cdot, q ), 1_{V_{j+1}, n}\rb,  \\
S_{j+1} x(n, 2 q+ 1)
 & = | S_j x(a_n, q) - S_j x(b_n, q) | 
   =  | \lb \overline{S}_j x(\cdot, q ), \psi_{j+1, n} \rb|. 
\end{split}
\end{equation*}
where $V_{j+1,n} = V_{j, a_n} \cup  V_{j, b_n}$.
Observe that
\begin{equation*}
2^{j_{m+1}} ( \kappa + 2^{-j_{m+1}})
 =  2^{j_{m+1}} \kappa + 1
 = 2 (  2^{j_{m+1}-1} \kappa ) + 1,
\end{equation*}
thus $S_{j_{m+1}} x (n,  2^{j_{m+1}} ( \kappa + 2^{-j_{m+1}})  )$ is calculated from the coefficients $ S_{j_{m+1} -1} x ( n ,   2^{j_{m+1}-1} \kappa   )  $  of the previous layer with
\begin{equation}
\label{eqn:jm+1}
S_{j_{m+1}} x (n,   2^{j_{m+1}} ( \kappa + 2^{-j_{m+1}})   ) 
 = |\lb \overline{S}_{j_{m+1} -1} x (\cdot,   2^{j_{m+1}-1} \kappa   ), \psi_{j_{m+1}, n} \rb|.
\end{equation}
Since $2^{j+1}\kappa = 2 \cdot 2^{j}\kappa$,  the coefficient 
$S_{j_{m+1}-1} x (n,   2^{j_{m+1}-1} \kappa    )$
is calculated from $S_{j_{m}} x (n, 2^{j_m} \kappa  )$ by $(j_{m+1}-1-j_m)$ times additions, and thus
\begin{equation}
\label{eqn:fromjmtojm+1-1}
S_{j_{m+1} -1} x (n,   2^{j_{m+1}-1} \kappa   )
 = \lb \overline{S}_{j_m} x(\cdot,  2^{j_m} \kappa   ), 1_{V_{j_{m+1}-1, n}} \rb.
\end{equation}
Combining equations (\ref{eqn:fromjmtojm+1-1}) and (\ref{eqn:jm+1}) gives
\begin{equation}
\label{eqn:jm+1b}
S_{j_{m+1}} x (n,   2^{j_{m+1}} ( \kappa + 2^{-j_{m+1}}) ) 
 = |\lb \overline{S}_{j_m} x(\cdot,   2^{j_m} \kappa  ), \psi_{j_{m+1}, n} \rb|.
\end{equation}
We go from the depth $j_{m+1}$ to the depth $j \geq j_{m+1}$ by computing
\[
 S_j x (n,  2^{j} ( \kappa + 2^{-j_{m+1}})   ) 
= \lb \overline {S}_{j_{m+1}} x (\cdot,   2^{j_{m+1}} ( \kappa + 2^{-j_{m+1}})  ) , 1_{V_{j, n}} \rb .
\]
Together with (\ref{eqn:jm+1b}) it proves the equation (\ref{propsdfnsd}) of
the proposition. The summation over $p, \, V_{j_{m+1}, p}\subset V_{j, n} $ comes from the inner product $\lb 1_{V_{j_{m+1}, p}}, 1_{V_{j,n}}  \rb$.
This also proves that $\kappa + 2^{-j_{m+1}}$ is the index
of a coefficient of order $m+1$.
\end{proof}

Since $S_0 x(n,0) = x(n)$, the proposition inductively proves that the coefficients at $j$-th level $S_j x(n, 2^{j} \kappa  )$ for $j_m\le j\le J$ are of order $m$. The expression in the proposition shows that an $m+1$ order scattering coefficient at scale $2^J$ is obtained by computing the Haar wavelet coefficients of several 
order $m$ 
coefficients at the scale $2^{j_{m+1}}$, taking an absolute value, and then averaging their amplitudes over $V_{J,n}$. It thus measures the
averaged variations at the scale $2^{j_{m+1}}$ of the $m$-th order scattering coefficients.

\section{Proof of Theorem 2.2}\label{sec:reconstr}

To prove Theorem 2.2, we first define an ``interlaced pairings''. We say that 
two pairings of $V=\{1, ..., d\}$ 
\[ 
\pi^{\epsilon} = \{ a_n^{\epsilon},  b_n^{\epsilon} \}_{0 \le n < d/2} 
\]
are interlaced for $\epsilon = 0, 1$ if there exists no strict subset $\Omega$ of $V$ such that  $\pi^0$ and $\pi^{1}$ are pairing elements within $\Omega$. 
The following lemma shows that a single-layer scattering operator is invertible 
with two interlaced pairings.

\begin{lemma}\label{lemma:interlacing}
Suppose that $x \in \R^d$ takes more than $2$ different values, and two pairings $\pi^0$ and $\pi^1$ of $V=\{1, ..., d\}$ are interlaced, then $x$ can be recovered from \[
S_1 x(n, 0) =   x(a_n) + x(b_n), \quad 
S_1 x(n, 1) = | x(a_n) - x(b_n)|, \quad
0 \le n < d/2.
\]
\end{lemma}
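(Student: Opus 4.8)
My plan is to proceed in three stages: recover the unordered value-pair sitting on each matching edge, use the interlacing hypothesis to show that the two pairings glue into a single cycle through all of $V$, and then reconstruct the labeling by propagating around that cycle. The first stage is immediate from the reconstruction identities (\ref{eq:recover}): from $S_1 x(n,0)=x(a_n)+x(b_n)$ and $S_1 x(n,1)=|x(a_n)-x(b_n)|$ I recover $\max(x(a_n),x(b_n))$ and $\min(x(a_n),x(b_n))$, hence the unordered set $\{x(a_n),x(b_n)\}$ for every pair of $\pi^0$ and of $\pi^1$. What is lost is only the assignment of each of the two values to its vertex.

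For the second stage I form the graph $H$ on $V$ whose edge set is the union of the pairs of $\pi^0$ and those of $\pi^1$. Each vertex lies in exactly one $\pi^0$-pair and one $\pi^1$-pair, so $H$ is $2$-regular and decomposes into disjoint cycles that alternate between $\pi^0$- and $\pi^1$-edges. Any proper union of some of these cycles would be a strict subset $\Omega$ of $V$ closed under both pairings, which interlacing forbids; here I use $d>2$, valid because $x$ takes more than two values, to rule out a doubled edge shared by $\pi^0$ and $\pi^1$. Hence $H$ is a single cycle $v_0,v_1,\dots,v_{d-1}$ visiting every vertex, and the data of stage one is exactly the unordered pair $P_i=\{x(v_i),x(v_{i+1})\}$ on each edge, with indices read modulo $d$.

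The third stage is the heart of the argument: these cyclic adjacent pairs determine the labeling once $x$ takes at least three values, which I establish by proving uniqueness of a consistent labeling. Let $y$ be any labeling with $\{y(v_i),y(v_{i+1})\}=P_i$ for all $i$, and set $D=\{i : y(v_i)\ne x(v_i)\}$. The key step is that $D$ is closed under the successor map: if $i\in D$ then $P_i$ cannot be a singleton, so $x(v_i)\ne x(v_{i+1})$, and since $y(v_i)\in P_i$ with $y(v_i)\ne x(v_i)$ we get $y(v_i)=x(v_{i+1})$, forcing $y(v_{i+1})=x(v_i)\ne x(v_{i+1})$, i.e.\ $i+1\in D$. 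A nonempty successor-closed subset of the cyclic index set is the whole set, so either $D=\emptyset$ or $D$ is all of $\mathbb{Z}/d$; in the latter case $y(v_i)=x(v_{i+1})$ for every $i$, and combining this at indices $i$ and $i+1$ yields $x(v_{i+2})=x(v_i)$ for all $i$, so $x$ alternates between only two values, which is excluded. Hence $D=\emptyset$ and $y=x$, so the true labeling is the unique consistent one and is recovered constructively by fixing $v_0$ to either value of $P_0$ and propagating around the cycle.

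The step I expect to be the main obstacle is this last one. Locally an edge whose two endpoint values differ could a priori be read in either orientation, and when consecutive values coincide the local data is degenerate, so no purely local disambiguation suffices. The resolution is the global cyclic rigidity captured by the successor-closedness of $D$, together with the more-than-two-values hypothesis, which is exactly what eliminates the single exceptional two-valued alternating configuration in which a genuine second solution appears.
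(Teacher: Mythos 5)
Your proof is correct, and its skeleton is the same as the paper's: recover the unordered value-pairs $\{x(a_n),x(b_n)\}$ from the sum and absolute difference, use interlacing to chain these pairs across all of $V$, and identify the two-valued alternating configuration as the unique obstruction. Where you genuinely differ is in how the chaining is carried out. The paper argues locally and inductively: it takes a triplet $n_1,n_2,n_3$ with $(n_1,n_2)\in\pi^0$ and $(n_1,n_3)\in\pi^1$, notes the assignment is resolved except when $x(n_1)\neq x(n_2)=x(n_3)$, invokes interlacing to adjoin a fourth index $n_4$, and then asserts that "this interlacing argument can be used to extend" the set of determined indices to all of $\{1,\dots,d\}$ --- a propagation step it leaves informal, and whose bookkeeping of ambiguous cases is actually the delicate part. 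You instead make the global structure explicit: the union of the two pairings is a $2$-regular graph whose cycles are precisely the subsets closed under both pairings, so interlacing forces a single Hamiltonian alternating cycle; you then replace forward propagation of determined values by a uniqueness argument, showing the defect set $D$ of any competing consistent labeling is successor-closed on $\mathbb{Z}/d$, hence empty or everything, with the "everything" branch collapsing to the excluded two-valued alternation. This buys rigor exactly where the paper is hand-wavy: the case analysis that the paper would have to track at every extension step is absorbed into one clean dichotomy, and the role of the "more than two values" hypothesis is isolated in a single place rather than appearing as a recurring exception. The only loose end in your write-up is minor: "fixing $v_0$ to either value of $P_0$ and propagating" should be read as trying both initial choices, of which exactly one closes up consistently around the cycle by your uniqueness claim.
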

\begin{proof}
By Eq. (2),  for a triplet ${n_1, n_2, n_3}$ if $(n_1, n_2)$ is a pair in $\pi^0$ and $(n_1, n_3)$ a pair in $\pi^1$ then the pair of values $\{ x(n_1), x(n_2)\}$ are determined (with a possible switch of the two) from 
\[
x(n_1) + x(n_2), \quad |x(n_1) - x(n_2)|
\]
and those of  $\{ x(n_1), x(n_3)\}$ are determined similarly. Then unless $x(n_1) \neq x(n_2)$ and $x(n_2) = x(n_3)$ the three values $x(n_1), x(n_2), x(n_3)$ are recovered. The interlacing condition implies that  $\pi^1$ pairs $n_2$ to an index $n_4$ which can not be $n_3$ or $n_1$. Thus, the four values of $x(n_1), x(n_2), x(n_3), x(x_4)$ are specified unless $x(n_4) = x(n_1) \neq x(n_2) = x(n_3)$.  This interlacing argument can be used to extend  to $\{1,\dots, d\}$ the set of all indices $n_i$ for which $x(n_i)$ is specified, unless $x$ takes only two values.
\end{proof}

\begin{proof}[Proof of Theorem 2.2]
Suppose that  the $2^J$ multiresolution approximations are associated 
to the $J$ hierarchical pairings $(\pi_1^{\epsilon_1}, ..., \pi_J^{\epsilon_J} )$ where $\epsilon_j \in \{0,1\}$, where for each $j$, $\pi_j^{0}$ and $\pi_j^{1}$ are two interlaced pairings of $d2^{-j}$ elements.  The sequence $(\epsilon_1, ..., \epsilon_J)$ 
is a binary vector taking $2^J$ different values. 

The constraint on the signal $x$ is that each of the intermediate scattering coefficients takes more than $2$ distinct values, which holds for $x \in \R^d$ except for a union of hyperplanes which has zero measure. Thus for almost every $x \in \R^d$, the theorem follows from applying Lemma \ref{lemma:interlacing} recursively to the $j$-th level scattering coefficients for $ J-1 \ge j \ge 0 $.
\end{proof}

\end{document}